\newcommand{\bE}{\mathbb{E}}
\newcommand{\aL}{\mathcal{A}}	
\newcommand{\oT}{\mathcal{T}}
\newcommand{\vL}{\mathcal{V}}
\newtheorem{theorem}{Theorem}
\newtheorem{lemma}[theorem]{Lemma}
\newtheorem{proposition}[theorem]{Proposition}
\newtheorem{assumption}{Assumption}
\newtheorem*{remark}{Remark}
\newcommand{\captionfonts}{\normalsize}
\long\def\@makecaption#1#2{%
  \vskip\abovecaptionskip
  \sbox\@tempboxa{{\captionfonts #1: #2}}%
  \ifdim \wd\@tempboxa >\hsize
    {\captionfonts #1: #2\par}
  \else
    \hbox to\hsize{\hfil\box\@tempboxa\hfil}%
  \fi
  \vskip\belowcaptionskip}
\DeclareMathOperator*{\argmin}{argmin}
\DeclareMathOperator*{\argmax}{argmax}
\begin{document}
\hspace{13.9cm}

\ \vspace{20mm}\\

{\LARGE Online Markov decision processes with policy iteration}

\ \\
{\bf \large Yao Ma$^{\displaystyle 1}$
, Hao Zhang$^{\displaystyle 1}$, Masashi Sugiyama$^{\displaystyle 2}$}\\
{$^{\displaystyle 1}$Tokyo Institute of Technology,
2-12-1 O-okayama, Meguro, Tokyo, 152-8552, Japan.}\\
{$^{\displaystyle 2}$The University of Tokyo,
7-3-1 Hongo, Bunkyo, Tokyo, 113-0033, Japan.}

%\ \\[-2mm]
{\bf Keywords:} Markov decision process, online learning, reinforcement learning

\thispagestyle{empty}
\markboth{}{NC instructions}
\ \vspace{-0mm}\\
%
%Abstract
\begin{center} {\bf Abstract} \end{center}
The \emph{online Markov decision process} (MDP)
is a generalization of the classical Markov decision process
that incorporates changing reward functions.
In this paper, we propose practical online MDP algorithms
with \emph{policy iteration}
and theoretically establish a sublinear regret bound.
A notable advantage of the proposed algorithm is that
it can be easily combined with function approximation,
and thus large and possibly continuous state spaces can be efficiently handled.
Through experiments, we demonstrate the usefulness of the proposed algorithm.

\section{Introduction}

A generalization of the classical shortest path problem in graph theory,
called the \emph{stochastic shortest path} problem \citep*{NDPbook09},
considers a probability distribution over all possible next nodes.
A standard way to solve the stochastic shortest path problem 
is to formulate it as a \emph{Markov decision process} (MDP)
and find a policy that maximizes the cumulative reward over the path. 
In the MDP problem, the agent chooses the best action according to the current state and
moves to the next state following the Markovian dynamics.
A fixed  reward function assigns a reward value to each state-action pair.
% (i.e., the conditional probability of action given current state)
% (i.e., a \emph{reward} is assigned to each edge)

A generalization of MDP, called the \emph{online MDP},
considers the situation where the reward function changes over time. At each time step, the learning agent decides the strategy of choosing actions by using the knowledge of past reward functions. Then, the current reward function which is chosen by the environment is revealed to the agent after observing its behavior. The goal of online MDP is to minimize the \emph{regret} against the best offline policy,
which is the optimal fixed policy in hindsight. 
We expect that the regret vanishes as the time step $T$ tends to infinity,
implying that the agent can behave as well as the best offline policy asymptotically.

Many online problems can be solved as online MDP problems. By setting the optimization variables as the state, the online MDP algorithm chooses the change of variables (action) which performs reasonably well in a non-stationary environment. \citet{MOOR09} presented several typical online problems which can be formulated as online MDP perfectly, e.g. paging, $k$-server, metrical task system and stochastic inventory control.

The online MDP problem was first introduced by 
%Even-Dar \emph{et al}.~
\citet{NIPS05,MOOR09} and an expert-based MDP algorithm (MDP-E) was proposed, which was shown to achieve regret $O(\sqrt{T}|A|)$ ($|A|$ is the cardinality of action space) by placing an expert algorithm on every state.
Furthermore, the MDP-E algorithm was proved to achieve regret $O(L^{2}\sqrt{T\log|A|})$ for online MDP problems with $L$-layered state space \citep*{Neu10b}.
% In the \emph{bandit feedback} scenario
% where only the reward \emph{value} for the current state and action (not the entire reward \emph{function}) is available,
% the MDP-E algorithm with an unbiased estimator of the reward function
% was proved to achieve regret $O(L^{2}\sqrt{T|A|\log|A|}/\alpha)$ \cite{Neu10a},
% $\alpha>0$ is the lower bound of the state probability distribution.
% % so the bound only holds when every state is reachable.
However, the MDP-E algorithm is not computationally feasible for
problems with large state space,
since it needs to put the expert algorithm on every state.
% Furthermore, when the state space is large, $\alpha$ can be very small 
% and thus the regret of the MDP-E algorithm can be very large.

Another online MDP algorithm called the \emph{lazy follow-the-perturbed-leader} (lazy-FPL) \citep*{Yu09} follows the main idea of the FPL algorithm which solves the Bellman equation using the average reward function. The ''lazy'' behavior of the lazy-FPL algorithm divides the time horizon into short periods and the policy is only updated at the end of each period. This lazy-FPL algorithm was proved to achieve sublinear regret $O(T^{3/4+\epsilon}\log{T(|S|+|A|)|A|^{2}})$ for $\epsilon\in(0,1/3)$. 

Similarly to lazy-FPL,
the \emph{online relative entropy policy search} (O-REPS) algorithm \citep*{Zimin13} also requires
to solve an optimization problem at the end of each time step.
It was shown that the O-REPS algorithm achieves regret
$O(L\sqrt{T\log(|S||A|/L)})$ for online MDP problems with $L$-layer state space.
% $O(L\sqrt{T\log(|S||A|/L)})$ and $O(\sqrt{L|S||A|T\log(|S||A|/L)})$
% in the full and bandit feedback scenarios for online MDP problems with $L$-layer state space, respectively. 
Thus, the regret bound of O-REPS is much sharper than those for the MDP-E algorithm
when $L$ is large.
% Furthermore, the assumption of the lower bound $\alpha>0$ is not necessary,
% which provides significant improvement in the regret bounds when the state space is large.
However, O-REPS requires the length of time horizon $T$ to be finite,
because the step size for parameter update needs to be set as a function of $T$.
Therefore, it cannot be directly extended to problems with infinite time horizon.
By introducing the stationary occupation measure, \citet{Dick14} proposed the \emph{mirror descent with approximation projections} algorithm, which formulate the online MDP problem as online linear optimization. Their theoretical results show that the regret is bounded by $O(\sqrt{T})$ where the finite state space assumption is essential.\citet{Yu09}, \citet{Yasin13},
and \citet{Neu12} considered even more challenging online MDP problems
under unknown or changing transition dynamics.

Recently, \citet{YaoECML14} proposed the \emph{online policy gradient} (OPG) algorithm
for online MDP problems with continuous state and action spaces,
and it was proved to achieve regret $O(\sqrt{T})$  under the concavity assumption
about the expected average reward function.
Although the OPG algorithm is natural and efficient for continuous problems,
the concavity assumption may not be realistic in practice.

% \subsection{Contributions}

The aim of this paper is to develop a novel algorithm for solving online MDPs
that is computationally efficient and performs well in problems with large state spaces.
More specifically, we propose a policy iteration algorithm for online MDPs (OMDP-PI), which has a close form update rule at each time step. We prove that
% , in the full feedback scenario,
our proposed algorithm achieves a sublinear regret with respect to a policy set. We further extend the proposed OMDP-PI algorithm with linear function approximation, which is essential for large (continuous) state space.  

%====================================
%Although the regret bound for the OMDP-S algorithm 
%is looser than that for the OMDP-TD algorithm,
%the OMDP-S algorithm is immediately applicable to the bandit feedback scenario
%and achieves regret $O(XXXXXXXXXX)$,
%thanks to the state-action value formulation.
%On the other hand, the OMDP-TD algorithm is suitable to
%handling online MDP problems with large or even continuous state and action spaces,
%because it can easily incorporate function approximation.
%Finally, we further extend the proposed algorithms to
%the stochastic shortest path problem. 
%We demonstrate the usefulness of the proposed methods
%through experiments. 
%====================================

%====================================
The remainder of this paper is organized as follows.
In Section~2, we give the formal definition of online MDPs.
In Section~3 we give the details of the proposed algorithm and analyze its regret. A generalization of the proposed algorithm with linear function approximation is also analyzed here.
In Section~4, we present a discussion on solving online MDPs with stochastic iteration.
In Section~5, we demonstrate the performance of the proposed algorithm
in simulation experiments.
In Section~6, we compare the related works with the proposed algorithm.
Finally, in Section~7, we conclude the paper.

\section{Problem definition and preliminaries}
In this section, we present the formal definition and involved preliminaries of the
online MDP problem.

\subsection{Online Markov decision process}
First, we formulate the problem of online MDP learning \citep*{NIPS05,MOOR09}
specified by $\{S,A,P,[r_{t}]_{t=1,\ldots,T}\}$, where
\begin{itemize}
\item
$S$ is the state space, and $|S|$ is the cardinality of state space.
\item 
$A$ is the action space, and $|A|$ is the cardinality of action space. 
\item
$P:S\times S\times A\rightarrow [0,1]$ is the transition probability, where $p(\bm{s}'|\bm{s},\bm{a})$ gives the conditional probability of next state $\bm{s}'$ by taking action $\bm{a}$ at state $\bm{s}$. We assume that the transition probability is available for the agent.
\item
$r_{1},\ldots,r_{T}$ is the reward function sequence, and only $r_{1},\ldots,r_{t}$ are observed at time step $1\le t\le T$.
\end{itemize}
At the end of each time step $t=1,\ldots,T$, trajectory $\bm{h}_{t}$ is observed:
\begin{align*}
\bm{h}_{t}&=\{\bm{s}_{1},\bm{a}_{1},r_{1}(\bm{s},\bm{a})
,\ldots,
\bm{s}_{t},\bm{a}_{t},r_{t}(\bm{s},\bm{a})\}.
\end{align*}

The objective of an online MDP algorithm is to produce a strategy of choosing an action at time step $t$ after observing $\bm{h}_{t}$. More specifically, let $\pi_{t}(\bm{a}|\bm{s}),\forall \bm{s}\in S,\bm{a}\in A$ 
be a stochastic time-dependent policy,
which is the conditional probability of action $\bm{a}$
to be taken at state $\bm{s}$ at time step $t$.

An online MDP algorithm $\aL$ learns
a time dependent policy that maximizes the expected cumulative rewards:
\begin{equation*}
R_{\aL}(T)=\sum_{t=1}^{T}\bE_{\pi_{t}}\left[
r_{t}(\bm{s}_{t},\bm{a}_{t})|\aL
\right],
\end{equation*}
where $\pi_{1},\ldots,\pi_{T}$ is the policy sequence generated by algorithm $\aL$ and $\bE_{\pi_{t}}[\cdot|\aL]$ denotes the expectation over the joint state-action distribution $p_{t}(\bm{s},\bm{a}|\aL)=p(\bm{s}_{t}=\bm{s}|\aL)\pi_{t}(\bm{a}|\bm{s})$ at time step $t$.

However, given that no information is available about future reward functions,
directly analyzing the expected cumulative rewards is not meaningful.
Here, in the same way as standard online learning literature \citep*{Cesa-Bianchi},
we consider the \emph{regret} against the best offline time independent policy $\pi^{*}$ in the policy set $\Pi$:
\begin{equation*}
L_{\aL}(T)=R_{\pi^{*}}(T)-R_{\aL}(T).
\end{equation*}
More precisely, $R_{\pi^{*}}(T)$ is the return of $\pi^{*}$
the best offline time independent policy:
\begin{align*}
R_{\pi^{*}}(T)&=\bE_{\pi^{*}}\left[\sum_{t=1}^{T}r_{t}(\bm{s}_{t},\bm{a}_{t})\right]
=
\sup_{\pi\in \Pi}\bE_{\pi}\left[\sum_{t=1}^{T}r_{t}(\bm{s}_{t},\bm{a}_{t})\right],
\end{align*}
where $\bE_{\pi}[\cdot]$ denotes the expectation over the state-action joint distribution given policy $\pi$.
Note that the regret we consider here is different from previous literature \citep*{NIPS05,MOOR09,Zimin13,Dick14}:
we compare the performance of algorithm $\aL$ against the best offline policy
\emph{within} a specific policy set $\Pi$.
Namely instead of the best deterministic greedy policy, we consider a set of ``efficient'' policies,
e.g., Gibbs policies with all possible parameters.

We expect that the regret $L_{\aL}(T)$ is sublinear with respect to $T$,
which means that the regret tends to zero as $T$ tends to infinity
and thus algorithm $\aL$ performs as well as the best offline policy $\pi^{*}$ asymptotically.

\subsection{Preliminaries}

%%%%%%%%%%%%%%%%%%%%%%%%%%%%%%%%%%%%%%%%%%%%%%%%%%%%%%%%%%%%%%%%%%
%========================================================

Next, we introduce some necessary notions for discussing
online MDP problems. First, we show some criterion for evaluating the performance of any stochastic policy. 
For any policy $\pi\in\Pi$, the expected average reward $\rho(\pi)$ is defined as
\begin{align*}
\rho_{r}(\pi)&=\bE_{\bm{s}\sim d_{\pi}(\bm{s}),\bm{a}\sim\pi}\left[r(\bm{s},\bm{a})\right]\\
&=\sum_{\bm{s}\in S}\sum_{\bm{a}\in A}d_{\pi}(\bm{s})\pi(\bm{a}|\bm{s})r(\bm{s},\bm{a}),
\end{align*}
where $d_{\pi}(\bm{s})$ is the stationary state distribution that satisfies
\begin{equation*}
d_{\pi}(\bm{s}')=\sum_{\bm{s}\in S}d_{\pi}(\bm{s})\sum_{\bm{a}\in A}\pi(\bm{a}|\bm{s})p(\bm{s}'|\bm{s},\bm{a}).
\end{equation*}
It has been shown that every ergodic MDP has a unique stationary state distribution. In this paper, we assume that for all $\pi\in\Pi$ 
the target MDP is ergodic.

Another way to evaluate the policy is to define the value function as
%%%%%%%%%%%%%%%%%%%%%%%%%%%%%%%%%%%%%%%%%%%%%%%%%%%%%%%%%%%%%%
\begin{equation*}
\vL_{r}^{\pi}(\bm{s})=\bE_{\pi}\left[\sum_{i=1}^{\infty}(r(\bm{s}_{i},\bm{a}_{i})-\rho_{r}(\pi))|\bm{s}_{1}=\bm{s}\right],
\end{equation*}
% The associated discounted action-value function is defined as
% \begin{equation*}
% Q^{\pi}(\bm{s},\bm{a})=\bE_{\pi}\left[\sum_{i=1}^{\infty}\alpha^{i-1}r(\bm{s}_{i},\bm{a}_{i})|\bm{s},\bm{a}\right].
% \end{equation*}
For any arbitrary reward function $r(\bm{s},\bm{a})$ and transition probability $p(\bm{s}'|\bm{s},\bm{a})$, there exist at least one optimal policy $\pi^{+}\in \Pi$ such that
\begin{equation*}
\vL_{r}^{\pi^{+}}(\bm{s})\geq \vL_{r}^{\pi}(\bm{s}), \forall \pi\in\Pi,\bm{s}\in S,
\end{equation*} 
\begin{equation*}
\rho_{r}(\pi^{+})\geq\rho_{r}(\pi),\forall\pi\in\Pi.
\end{equation*}
% \begin{equation*}
% Q^{\pi^{+}}(\bm{s},\bm{a})\geq Q^{\pi}(\bm{s},\bm{a}),\forall \pi\in\Pi,\bm{s}\in S,\bm{a}\in A.
% \end{equation*}

Similarly, the state-action function is defined as
\begin{equation*}
Q_{r}^{\pi}(\bm{s},\bm{a})=\bE_{\pi}\left[\sum_{i=1}^{\infty}(r(\bm{s}_{i},\bm{a}_{i})-\rho_{r}(\pi))|\bm{s}_{1}=\bm{s},\bm{a}_{1}=\bm{a}\right].
\end{equation*}
Since the optimal value function leads to the optimal policy, MDP is often solved by deriving the optimal value function \citep*{Sutton98}. So far, various efficient methods for approximating the optimal value function have been proposed. However, these algorithms were not proved to converge to the value function corresponding to the optimal deterministic policy. For this reason, in this paper we only consider the stochastic policy, since the convergence guarantee is provided \citep*{Tsi99}. 

\section{Online MDPs with policy iteration}

In this section, we introduce the proposed method for online MDPs. The key idea of the proposed algorithm is motivated by the Lazy FPL algorithm by \citet{Yu09}, which performs linear programming to obtain the `leader' policy. As \citet{Yu09} pointed out, solving linear programming may not be appropriate for problems with large (continuous) state space. For this reason, we employ a policy iteration type method together with a stochastic policy in our proposed method.

\subsection{Algorithm}

Firstly, we define the policy improvement operator $\Gamma:\pi(\bm{a}|\bm{s})=\Gamma(r(\bm{s},\bm{a}),V(\bm{s}))$, where $r(\bm{s},\bm{a})$ is an arbitrary reward function, $V(\bm{s})$ is an arbitrary value function. Below we use $\Gamma(r,V)$ instead of $\Gamma(r(\bm{s},\bm{a}),V(\bm{s}))$ for notational simplicity. Now we introduce two assumptions on the defined operator $\Gamma$.
\begin{assumption}
\label{AssumptionPolicy}
For an arbitrary reward function $r$ and two arbitrary value functions $V_{1}(\bm{s})$ and $V_{2}(\bm{s})$, the policies $\pi_{1}=\Gamma(r,V_{1})$ and $\pi_{2}=\Gamma(r,V_{2})$ satisfy
\begin{align*}
\|\pi_{1}(\bm{s},\cdot)-\pi_{2}(\bm{s},\cdot)\|_{1}
\leq \xi\|V_{1}(\cdot)-V_{2}(\cdot)\|_{\infty},
\end{align*}
where $\xi>0$ is the Lipschitz constant depending on the specific policy model. $\|\cdot\|_{1}$ denotes the $L_{1}$ norm, $\|\cdot\|_{\infty}$ denotes the infinity norm in this paper.
\end{assumption}
\begin{assumption}
\label{AssumptionPolicyr}
For an arbitrary value function $V(\bm{s})$ and two arbitrary reward functions $r(\bm{s},\bm{a})$ and $r'(\bm{s},\bm{a})$, the policies $\pi=\Gamma(r,V)$ and $\pi'=\Gamma(r',V)$ satisfy
\begin{align*}
\|\pi(\bm{s},\cdot)-\pi'(\bm{s},\cdot)\|_{1}
\leq& \xi\|r(\bm{s},\cdot)-r'(\bm{s},\cdot)\|_{\infty},
\end{align*}
\end{assumption}

The \emph{Gibbs policy} is a popular model which was demonstrated to work well:
\begin{equation*}
\pi(\bm{a}|\bm{s})=\frac{\exp{\frac{1}{\kappa}\left(r(\bm{s},\bm{a})+\sum_{\bm{s}'\in S}p(\bm{s}'|\bm{s},\bm{a})V(s')\right))}}{\sum_{\bm{a}'\in A}\exp{\frac{1}{\kappa}\left(r(\bm{s},\bm{a}')+\sum_{\bm{s}'\in S}p(\bm{s}'|\bm{s},\bm{a}')V(s')\right)}},
\end{equation*}
where $\kappa$ is the exploration parameter. We can show that the Gibbs policy satisfies Assumption~\ref{AssumptionPolicy} and Assumption~\ref{AssumptionPolicyr} (the proofs are provided in Appendix~\ref{app:Gibbspolicy}).

Throughout this paper, we only consider stochastic policies that satisfy the above two assumptions. Let $\Pi$ be the set of policies generated by the operator $\Gamma$. Then our proposed \emph{online MDP with policy iteration (OMDP-PI) algorithm} is given as follows:

\begin{itemize}
\item Initialize the value function $V_{0}(\bm{s})=0,\forall \bm{s}\in S$.
\item for $t=1,\ldots,\infty$
\begin{enumerate}
\item Observe the current state $\bm{s}_{t}=\bm{s}$.
\item Improve the policy as $\pi_{t}=\Gamma(\hat{r}_{t-1},V_{t-1})$, where 
\begin{equation*}
\hat{r}_{t-1}(\bm{s},\bm{a})=\frac{1}{t-1}\sum_{k=1}^{t-1}r_{k}(\bm{s},\bm{a}).
\end{equation*}
\item Take action $\bm{a}_{t}=\bm{a}$ by following $\pi_{t}$.
\item The reward function $r_{t}(\bm{s},\bm{a})$ is revealed.
\item Update the value function according to
				\begin{equation}
				\label{TDupdaterule}
					 V_{t}(\bm{s})=(1-\gamma_{t})V_{t-1}(\bm{s})+\gamma_{t}\vL_{r_{t}}^{\pi_{t}}(\bm{s}),
				\end{equation}
				where the step size is $\gamma_{t}=1/t$.
\end{enumerate}
\end{itemize}

It is well known \citep*{Sutton98} that the value function satisfies
\begin{equation*}
\vL_{r}^{\pi}(\bm{s})=\bE_{\pi}\left[r(\bm{s},\bm{a})-\rho_{r}(\pi)+\sum_{\bm{s}'\in S}p(\bm{s}'|\bm{s},\bm{a})\vL_{r}^{\pi}(\bm{s}')\right].
\end{equation*}
The above equation can be rewritten in matrix form as
\begin{equation}
\label{reccu}
\vL_{r}^{\pi}=R(\pi)-\bm{e}_{|S|}\rho_{r}(\pi)+P^{\pi}\vL_{r}^{\pi},
\end{equation}
where $\vL_{r}^{\pi}$ is the $|S|$-dimensional column vector whose $\bm{s}$th element is $\vL_{r}^{\pi}(\bm{s})$. $R(\pi)$ is the $|S|$-dimensional column vector whose $\bm{s}$th element is $\sum_{\bm{a}\in A}\pi(\bm{a}|\bm{s})r(\bm{s},\bm{a})$. $P^{\pi}$ is the transition matrix induced by the policy $\pi$, whose $\bm{s}\bm{s}'$th element is $p^{\pi}(\bm{s}|\bm{s}')=\sum_{\bm{a}\in A}\pi(\bm{a}|\bm{s})p(\bm{s}'|\bm{s},\bm{a})$. $\bm{e}_{|S|}$ is the $|S|$-dimensional column vector with all ones. It is well known \citep*{Sutton98} that the above equation has no unique solution. Here we introduce the following constraint on the value function:
\begin{align*}
\bE_{\bm{s}\sim d_{\pi}(\bm{s})}[\vL_{r}^{\pi}(\bm{s})]=\bE_{\bm{s}\sim d_{\pi}(\bm{s}),\bm{a}\sim\pi}\left[\sum_{i=1}^{\infty}(r(\bm{s},\bm{a})-\rho_{r}(\pi))\right]=0.
\end{align*}
By this constraint, the solution of Equ.\eqref{reccu} becomes unique and satisfies
\begin{equation}
\label{solution}
\vL_{r}^{\pi}=R(\pi)-\bm{e}_{|S|}\rho_{r}(\pi)+P^{\pi}\vL_{r}^{\pi}-\bm{e}_{|S|}d_{\pi}^{\top}\vL_{r}^{\pi},
\end{equation}
where $d_{\pi}$ is the $|S|$-dimensional column vector whose $\bm{s}$th element is $d_{\pi}(\bm{s})$.

Then the update rule \eqref{TDupdaterule} can be expressed in closed form as
\begin{equation*}
V_{t}=(1-\gamma_{t})V_{t-1}+\gamma_{t}(\bm{I}_{|S|}-P^{\pi_{t}}+\bm{e}_{|S|}d_{\pi_{t}}^{\top})^{-1}(R_{t}(\pi_{t})-\bm{e}_{|S|}\rho_{r_{t}}(\pi_{t})).
\end{equation*}
Since the stationary distribution can be obtain by the eigenvector corresponding to the unit eigenvalue, we can calculate $\rho_{r_{t}}(\pi_{t})$ directly. Then, $V_{t}(\bm{s})$ can be obtained directly without solving an optimization problem when the state space is not large (continuous). In the following sections, we will introduce an approximation method to handle large (continuous) state space problems. 

\subsection{Regret analysis}
In this section, we provide a regret analysis for the proposed OMDP-PI algorithm. Firstly, we introduce several essential assumptions involved in the proof. Similarly to the previous works \citep*{NIPS05,MOOR09,Yu09,Neu10a,Neu14,YaoECML14}, we assume the following conditions.
\begin{assumption}
\label{mix}
For all $\pi\in\Pi$, there exist a positive constant $\tau$ such that two arbitrary state distributions $d(\bm{s})$ and $d'(\bm{s})$ satisfy
\begin{equation*}
\sum_{\bm{s}\in S}\sum_{\bm{s}'\in S}|d(\bm{s})-d'(\bm{s})|p^{\pi}(\bm{s}'|\bm{s})\leq e^{-1/\tau}\sum_{\bm{s}\in S}|d(\bm{s})-d'(\bm{s})|.
\end{equation*}
\end{assumption}
\begin{assumption}
The reward functions satisfy
\begin{equation*}
r_{t}(\bm{s},\bm{a})\in[0,1],\forall\bm{s}\in S,\forall\bm{a}\in A,\forall t=1,\ldots,T.
\end{equation*}
\end{assumption}

Under these assumptions, the regret of the OMDP-PI algorithm for a policy set $\Pi$ is bounded as follows:
\begin{theorem}
\label{theorem1}
After $T$ time steps, the regret against the best offline time independent policy of the OMDP-PI algorithm is bounded as
\begin{align*}
L_{\mathrm{OMDP-PI}}(T)&\leq \frac{2-e^{-1/\tau}}{1-e^{-1/\tau}}C\xi T^{C_{v}}+\left(\frac{6\tau\xi(2-e^{-1/\tau})}{1-e^{-1/\tau}}+2\tau^{3}\right)\ln{T}\\
&~~~+\left(\frac{6\tau\xi(2-e^{-1/\tau})}{1-e^{-1/\tau}}+2\tau^{3}+2\tau^{3}e^{\tau+2}+4\tau\right),
\end{align*}
where $C=6\tau(2-C_{v}+\frac{1}{C_{v}}+\frac{1-C_{v}}{1+C_{v}})$, $C_{v}=\xi C_{\pi}$, and $C_{\pi}$ is a positive constant such that for all $\pi_{1},\pi_{2}\in\Pi$,
\begin{equation*}
\|\vL_{r}^{\pi_{1}}-\vL_{r}^{\pi_{2}}\|_{\infty}\leq
C_{\pi}\|\pi_{1}-\pi_{2}\|_{1}.
\end{equation*} 
\end{theorem}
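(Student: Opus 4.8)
The plan is to split $L_{\mathrm{OMDP-PI}}(T)=R_{\pi^{*}}(T)-R_{\aL}(T)$ into three pieces by inserting the stationary average rewards $\rho_{r_t}(\cdot)$: a \emph{learning} term $\sum_{t=1}^{T}\bigl(\rho_{r_t}(\pi^{*})-\rho_{r_t}(\pi_t)\bigr)$, a \emph{comparator mixing} term $R_{\pi^{*}}(T)-\sum_t\rho_{r_t}(\pi^{*})$, and an \emph{algorithm mixing} term $\sum_t\rho_{r_t}(\pi_t)-R_{\aL}(T)$. Both mixing terms I would control through the Poisson equation \eqref{reccu}, which gives the per-step relation $R_t(\pi_t)-\bm{e}_{|S|}\rho_{r_t}(\pi_t)=(\bm{I}_{|S|}-P^{\pi_t})\vL_{r_t}^{\pi_t}$. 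Pairing this against the running state distribution $\mu_t$ (with $\mu_{t+1}=\mu_t P^{\pi_t}$) converts each gap into the telescoping difference $\langle\mu_t,\vL_{r_t}^{\pi_t}\rangle-\langle\mu_{t+1},\vL_{r_t}^{\pi_t}\rangle$; for the comparator, $\|d^{\pi^{*}}_t-d_{\pi^{*}}\|_1\le 2e^{-(t-1)/\tau}$ sums to the constant $\tfrac{2}{1-e^{-1/\tau}}$ by Assumption~\ref{mix}.

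A prerequisite I would establish first is \emph{policy and value stability}. From the closed form $V_t=\tfrac1t\sum_{k=1}^t\vL_{r_k}^{\pi_k}$ forced by $\gamma_t=1/t$, together with the bound $\|\vL_{r}^{\pi}\|_\infty\le 1/(1-e^{-1/\tau})$ implied by Assumption~\ref{mix}, I get $\|V_t\|_\infty\lesssim\tau$ and $\|V_t-V_{t-1}\|_\infty=O(\tau/t)$. Feeding this and $\|\hat r_t-\hat r_{t-1}\|_\infty\le 1/t$ into Assumptions~\ref{AssumptionPolicy} and~\ref{AssumptionPolicyr} yields $\|\pi_{t+1}-\pi_t\|_1=O(\xi\tau/t)$, and hence $\|d_{\pi_{t+1}}-d_{\pi_t}\|_1=O(\xi\tau/(t(1-e^{-1/\tau})))$ by the standard perturbation bound for stationary distributions (again via Assumption~\ref{mix}). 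Substituting these into the recursion $\|\mu_{t+1}-d_{\pi_{t+1}}\|_1\le e^{-1/\tau}\|\mu_t-d_{\pi_t}\|_1+\|d_{\pi_{t+1}}-d_{\pi_t}\|_1$ and summing the resulting geometric-with-forcing recursion is what produces the $\tfrac{2-e^{-1/\tau}}{1-e^{-1/\tau}}$ and $\tau^{3}$ coefficients on $\ln T$ and on the constant.

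For the learning term I would exploit that $\pi_t=\Gamma(\hat r_{t-1},V_{t-1})$ is an approximate follow-the-leader step: with the exact value function, $\Gamma$ returns the stationary-average-optimal policy for $\hat r_{t-1}$, so $\sum_t(\rho_{r_t}(\pi^{*})-\rho_{r_t}(\pi_t))$ splits into a follow-the-leader regret against the running leader, which is $O(\ln T)$ since the leader drifts by $O(1/t)$, plus an approximation gap $L_\rho\sum_t\|V_{t-1}-\vL^{\dagger}_{\hat r_{t-1}}\|_\infty$, where $\vL^{\dagger}_{r}$ is the fixed point of $\oT_r(V):=\vL_r^{\Gamma(r,V)}$. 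The key structural observation is that $\oT_r$ is a contraction with modulus $C_v=\xi C_\pi$, since $\|\oT_r(V_1)-\oT_r(V_2)\|_\infty\le C_\pi\|\Gamma(r,V_1)-\Gamma(r,V_2)\|_1\le C_\pi\xi\|V_1-V_2\|_\infty$ by the definition of $C_\pi$ and Assumption~\ref{AssumptionPolicy}. The averaged iteration $V_t=(1-\tfrac1t)V_{t-1}+\tfrac1t\vL_{r_t}^{\pi_t}$ then tracks $\vL^{\dagger}$ with per-step error decaying like $t^{-(1-C_v)}$, whose partial sums are $O(T^{C_v}/C_v)$; this is the source of the leading term and of the $\tfrac1{C_v}$ inside $C$.

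The hard part will be making this last step rigorous, because $V_t$ is driven by the \emph{instantaneous} value function $\vL_{r_t}^{\pi_t}$ rather than by $\oT_{\hat r_t}(V_{t-1})$. I would therefore decompose $\vL_{r_t}^{\pi_t}=\oT_{\hat r_t}(V_{t-1})+(\vL_{r_t}^{\pi_t}-\vL_{\hat r_t}^{\pi_t})+(\vL_{\hat r_t}^{\pi_t}-\oT_{\hat r_t}(V_{t-1}))$, bounding the last term by the contraction estimate plus $\|\hat r_{t-1}-\hat r_t\|_\infty\le 1/t$, and controlling the middle ``noise'' term via the running-average structure, so that $\tfrac1t\sum_k\vL_{r_k-\hat r_t}^{\pi_k}$ nearly cancels and its residual is governed by the policy drift already bounded above. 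Closing the resulting coupled recursion between the value-tracking error and the policy drift, and in particular checking that the forcing accumulates to $O(T^{C_v})$ rather than $O(T)$, is the delicate core of the argument; it is precisely here that the contraction hypothesis $C_v<1$ is indispensable.
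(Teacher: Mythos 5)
Your proposal follows essentially the same route as the paper: the identical three-way decomposition into two mixing terms and a learning term, the mixing terms controlled via Assumption~\ref{mix} and the policy-drift recursion for $\|d_{\aL,t}-d_{\pi_t}\|_1$ (the paper's Lemma~\ref{lemma3}), and the learning term reduced to tracking the optimal value function for the running average reward through the composite contraction $V\mapsto\vL_{r}^{\Gamma(r,V)}$ with modulus $C_v=\xi C_\pi$, whose $1/t$-averaged iteration yields the $O(t^{C_v-1})$ per-step error and hence the $T^{C_v}$ leading term (the paper's Proposition~\ref{bound} and Lemma~\ref{lemma2}). The coupled recursion you flag as the delicate core is exactly what the paper closes by sandwiching $\vL_{\hat{r}_{t+1}}^{\pi_{t+1}^{*}}$ between averaged updates using the optimality of $\pi_t^{*}$ and $\pi_{t+1}^{*}$, so your outline is a faithful blueprint of the paper's argument.
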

The existence of $C_{\pi}$ is proved in Appendix~\ref{Appendix:contraction}.
\begin{remark}
The regret bound in Theorem~\ref{theorem1} is sublinear with respect to $T$ when $C_{v}<1$. However, the quality of the policy is limited when $C_{v}$ is small. Since the smaller the constant $C_{v}$ is, the poorer the performance of the best offline policy is. In an extreme case, where all the policies in the set $\Pi$ perform equally, when $C_{v}=0$.
\end{remark}
To prove the claimed result in Theorem~\ref{theorem1}, we decompose the regret into three parts in the same way as previous works \citep*{NIPS05,MOOR09,Yasin13,YaoECML14}:
\begin{align*}
L_{\aL}(T)&=\left(\bE_{\pi^{*}}\left[\sum_{t=1}^{T}r_{t}(\bm{s}_{t},\bm{a}_{t})\right]-\sum_{t=1}^{T}\rho_{r_t}(\pi^{*})\right)+\left(\sum_{t=1}^{T}\rho_{r_t}(\pi^{*})-\sum_{t=1}^{T}\rho_{r_t}(\pi_{t})\right)\\
&~~~+\left(\sum_{t=1}^{T}\rho_{r_t}(\pi_{t})-\bE_{\pi_{t}}\left[\sum_{t=1}^{T}r_{t}(\bm{s}_{t},\bm{a}_{t})\right]\right).
\end{align*}
The first term has been analyzed in previous works \citep*{NIPS05,MOOR09,YaoECML14}, which is bounded as
\begin{equation*}
\bE_{\pi^{*}}\left[\sum_{t=1}^{T}r_{t}(\bm{s}_{t},\bm{a}_{t})\right]-\sum_{t=1}^{T}\rho_{r_t}(\pi^{*})\leq 2\tau.
\end{equation*}
Below, we bound the second and the third terms in Lemma~\ref{lemma2} and Lemma~\ref{lemma3} which are proved in Appendix~\ref{Appendix:lemma2} and Appendix~\ref{Appendix:lemma3}.
\begin{lemma}
\label{lemma2}
After $T$ time steps, the policy sequence $\pi_{1},\ldots,\pi_{T}$ given by OMDP-PI and the best offline policy $\pi^{*}\in\Pi$ satisfy
\begin{equation*}
\sum_{t=1}^{T}\rho_{r_t}(\pi^{*})-\sum_{t=1}^{T}\rho_{r_t}(\pi_{t})\leq
\frac{2-e^{-1/\tau}}{1-e^{-1/\tau}}\left(C\xi T^{C_{v}}+6\tau \xi\ln{T}+6\tau \xi\right),
\end{equation*}
where $C=6\tau (2-C_v+\frac{1}{C_v}+\frac{1-C_v}{1+C_v})$.
\end{lemma}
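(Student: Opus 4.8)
The plan is to reduce the whole estimate to the Lipschitz behaviour of the average reward $\rho_{r}(\cdot)$ as a functional of the policy, and then run a follow-the-leader argument against the policy-iteration fixed point. The starting observation is that for any $r\in[0,1]$ and any $\pi,\pi'\in\Pi$,
$$\rho_{r}(\pi)-\rho_{r}(\pi')=\sum_{\bm{s}}\big(d_{\pi}(\bm{s})-d_{\pi'}(\bm{s})\big)\sum_{\bm{a}}\pi(\bm{a}|\bm{s})r(\bm{s},\bm{a})+\sum_{\bm{s}}d_{\pi'}(\bm{s})\sum_{\bm{a}}\big(\pi(\bm{a}|\bm{s})-\pi'(\bm{a}|\bm{s})\big)r(\bm{s},\bm{a}),$$
so that, bounding the first sum by $\|d_{\pi}-d_{\pi'}\|_{1}$ and the second by $\|\pi-\pi'\|_{1}$, and using Assumption~\ref{mix} to control the stationary-distribution perturbation by $\|d_{\pi}-d_{\pi'}\|_{1}\le\frac{1}{1-e^{-1/\tau}}\|\pi-\pi'\|_{1}$, one obtains the key Lipschitz estimate $|\rho_{r}(\pi)-\rho_{r}(\pi')|\le\frac{2-e^{-1/\tau}}{1-e^{-1/\tau}}\|\pi-\pi'\|_{1}$. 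This is exactly where the prefactor $\frac{2-e^{-1/\tau}}{1-e^{-1/\tau}}$ in the statement comes from, and everything downstream is a bound on cumulative policy differences.

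Next I introduce the \emph{leader} policy $\tilde{\pi}_{t}=\Gamma(\hat{r}_{t-1},\vL_{\hat{r}_{t-1}}^{\tilde{\pi}_{t}})$, i.e.\ the policy-iteration fixed point for the averaged reward $\hat{r}_{t-1}$, which I take as the maximiser of $\sum_{k<t}\rho_{r_{k}}(\cdot)=(t-1)\rho_{\hat{r}_{t-1}}(\cdot)$ over $\Pi$. I then split $\rho_{r_{t}}(\pi^{*})-\rho_{r_{t}}(\pi_{t})=\big(\rho_{r_{t}}(\pi^{*})-\rho_{r_{t}}(\tilde{\pi}_{t})\big)+\big(\rho_{r_{t}}(\tilde{\pi}_{t})-\rho_{r_{t}}(\pi_{t})\big)$. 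For the first piece, a standard be-the-leader telescoping (valid because $\rho_{r}$ is linear in $r$, so $\sum_{t}\rho_{r_{t}}(\tilde{\pi}_{t+1})\ge\sum_{t}\rho_{r_{t}}(\pi^{*})$) gives $\sum_{t}\big(\rho_{r_{t}}(\pi^{*})-\rho_{r_{t}}(\tilde{\pi}_{t})\big)\le\frac{2-e^{-1/\tau}}{1-e^{-1/\tau}}\sum_{t}\|\tilde{\pi}_{t+1}-\tilde{\pi}_{t}\|_{1}$. The leader moves slowly: by Assumptions~\ref{AssumptionPolicy}--\ref{AssumptionPolicyr}, $\|\tilde{\pi}_{t+1}-\tilde{\pi}_{t}\|_{1}\le\xi\|\hat{r}_{t}-\hat{r}_{t-1}\|_{\infty}+\xi\|\vL_{\hat{r}_{t}}^{\tilde{\pi}_{t+1}}-\vL_{\hat{r}_{t-1}}^{\tilde{\pi}_{t}}\|_{\infty}$, and since $\|\hat{r}_{t}-\hat{r}_{t-1}\|_{\infty}\le 2/t$ and the fixed point is Lipschitz in the reward (hence drifts by $O(1/t)$), this is $O(1/t)$; summing gives the $\ln T$ contribution with the constant $6\tau\xi$ and the accompanying constant term.

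The real work is the \emph{approximation} piece, which by the Lipschitz estimate and Assumption~\ref{AssumptionPolicy} satisfies $\rho_{r_{t}}(\tilde{\pi}_{t})-\rho_{r_{t}}(\pi_{t})\le\frac{2-e^{-1/\tau}}{1-e^{-1/\tau}}\,\xi\,\Delta_{t-1}$, where $\Delta_{t}=\|V_{t}-\vL_{\hat{r}_{t}}^{\tilde{\pi}_{t+1}}\|_{\infty}$ measures how far the running value estimate sits from the leader's value function. The engine is that the policy-evaluation map $V\mapsto\vL_{\hat{r}}^{\Gamma(\hat{r},V)}$ is a contraction: combining the definition of $C_{\pi}$ with Assumption~\ref{AssumptionPolicy} yields $\|\vL_{\hat{r}}^{\Gamma(\hat{r},V_{1})}-\vL_{\hat{r}}^{\Gamma(\hat{r},V_{2})}\|_{\infty}\le C_{\pi}\xi\|V_{1}-V_{2}\|_{\infty}=C_{v}\|V_{1}-V_{2}\|_{\infty}$, which is a genuine contraction precisely when $C_{v}<1$. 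Reading off $V_{t}=\frac1t\sum_{k=1}^{t}\vL_{r_{k}}^{\pi_{k}}$ from the $\gamma_{t}=1/t$ update and using linearity of $\vL_{\cdot}^{\pi}$ in the reward to subtract $\vL_{\hat{r}_{t}}^{\tilde{\pi}_{t+1}}=\frac1t\sum_{k}\vL_{r_{k}}^{\tilde{\pi}_{t+1}}$, I arrive at $\Delta_{t}\le\frac{C_{\pi}}{t}\sum_{k=1}^{t}\|\pi_{k}-\tilde{\pi}_{t+1}\|_{1}$; expanding $\|\pi_{k}-\tilde{\pi}_{t+1}\|_{1}$ through $\Gamma$ reintroduces the earlier $\Delta_{k}$ together with slow reward and fixed-point drift, and feeding this into the cumulative quantity $S_{t}=\sum_{k\le t}\Delta_{k}$ produces $S_{t}\le(1+\tfrac{C_{v}}{t})S_{t-1}+(\text{lower order})$, whose homogeneous growth $\prod_{k\le t}(1+C_{v}/k)\sim t^{C_{v}}$ delivers the claimed $T^{C_{v}}$.

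The main obstacle is exactly controlling $\sum_{t}\Delta_{t}$. A naive per-step recursion $\Delta_{t}\le(1-\tfrac{1-C_{v}}{t})\Delta_{t-1}+\tfrac1t\|\eta_{t}\|+D_{t}$ is useless on its own: the single-step reward noise $\eta_{t}=\vL_{r_{t}-\hat{r}_{t-1}}^{\pi_{t}}$ is only $O(1)$ because the rewards are adversarial and there is no per-step cancellation, which would force $\Delta_{t}$ to a positive constant and hence give linear regret. The correct route is to work with the averaged representation $V_{t}=\frac1t\sum_{k}\vL_{r_{k}}^{\pi_{k}}$, where the exact identity $\sum_{k\le t}(r_{k}-\hat{r}_{t})=0$ makes the noise telescope so that only the slow policy drift (itself $O(1/k)$ by the leader-stability estimate) survives; combined with the $C_{v}$-contraction this keeps the forcing of the $S_{t}$-recursion genuinely subdominant to $t^{C_{v}}$. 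Making that forcing subdominant, rather than the constant a crude bound produces, is the delicate step where $C_{v}<1$ and the $1/t$ averaging must be used together, and the explicit constant $C=6\tau(2-C_{v}+\tfrac{1}{C_{v}}+\tfrac{1-C_{v}}{1+C_{v}})$ then emerges from evaluating $\sum_{t}t^{C_{v}-1}\approx T^{C_{v}}/C_{v}$ and its companion sums, with the $6\tau$ tracking the mixing bound $\tfrac{1}{1-e^{-1/\tau}}\lesssim\tau$ on the value functions.
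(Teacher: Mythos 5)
Your overall architecture matches the paper's: the Lipschitz bound $|\rho_{r}(\pi)-\rho_{r}(\pi')|\leq\frac{2-e^{-1/\tau}}{1-e^{-1/\tau}}\|\pi-\pi'\|_{1}$, a follow-the-leader comparison against the leader for $\hat{r}_{t-1}$ whose $O(1/t)$ per-step drift yields the $6\tau\xi\ln T$ term, and a tracking bound on the distance between $V_{t}$ and the leader's value function yielding the $T^{C_{v}}$ term. The first two pieces are sound. The gap is in the tracking bound, which is the entire technical content of the paper's Proposition~\ref{bound}. From $V_{t}=\frac{1}{t}\sum_{k}\vL_{r_{k}}^{\pi_{k}}$ and $\vL_{\hat{r}_{t}}^{\tilde{\pi}_{t+1}}=\frac{1}{t}\sum_{k}\vL_{r_{k}}^{\tilde{\pi}_{t+1}}$ you get $\Delta_{t}\leq\frac{C_{\pi}}{t}\sum_{k=1}^{t}\|\pi_{k}-\tilde{\pi}_{t+1}\|_{1}$, and after splitting off $\xi\Delta_{k-1}$ the surviving term is $\|\tilde{\pi}_{k}-\tilde{\pi}_{t+1}\|_{1}$, the \emph{cumulative} leader drift from round $k$ to round $t+1$, which is $\sum_{j=k}^{t}O(1/j)=O(\ln(t/k))$ --- not the per-step drift $O(1/k)$ you invoke (``slow policy drift, itself $O(1/k)$''). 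Since $\frac{1}{t}\sum_{k=1}^{t}\ln(t/k)\rightarrow 1$, the forcing in your recursion for $S_{t}$ is $\Theta(1)$ per step, not lower order, and $S_{t}\leq(1+C_{v}/t)S_{t-1}+\Theta(1)$ has solution $\Theta(t)$: the particular solution swamps the homogeneous $t^{C_{v}}$ growth and you end up with linear regret. Running the same argument per step does not help: changing the comparator from $\tilde{\pi}_{t}$ to $\tilde{\pi}_{t+1}$ perturbs all $t$ summands of the average, giving $\Delta_{t}\leq(1-\frac{1-C_{v}}{t})\Delta_{t-1}+O(1/t)$, whose solution is again $\Theta(1)$ rather than $O(t^{C_{v}-1})$.

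The paper reduces the forcing to $O(1/t^{2})$ by a device you do not have: it sandwiches $\vL_{\hat{r}_{t+1}}^{\pi_{t+1}^{*}}$ between $(1-\frac{1}{t+1})\vL_{\hat{r}_{t}}^{\pi_{t}^{*}}+\frac{1}{t+1}\vL_{r_{t+1}}^{\pi_{t+1}^{*}}$ and $(1-\frac{1}{t+1})\vL_{\hat{r}_{t}}^{\pi_{t}^{*}}+\frac{1}{t+1}\vL_{r_{t+1}}^{\pi_{t}^{*}}$, using the \emph{pointwise} optimality $\vL_{\hat{r}_{t}}^{\pi_{t}^{*}}(\bm{s})\geq\vL_{\hat{r}_{t}}^{\pi_{t+1}^{*}}(\bm{s})$ and $\vL_{\hat{r}_{t+1}}^{\pi_{t+1}^{*}}(\bm{s})\geq\vL_{\hat{r}_{t+1}}^{\pi_{t}^{*}}(\bm{s})$ for every state. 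The comparator sequence therefore obeys (up to a two-sided error) the \emph{same} $\gamma_{t}=1/t$ averaging recursion as $V_{t}$, so the only new error at step $t+1$ is $\frac{1}{t+1}\Delta_{t+1}$ with $\Delta_{t+1}\leq C_{v}\|\vL_{\hat{r}_{t}}^{\pi_{t}^{*}}-V_{t}\|_{\infty}+\frac{(4\tau+2)C_{v}}{t+1}$; the leader drift enters only multiplied by the step size, whence the $O(1/t^{2})$ forcing and the rate $O(t^{C_{v}-1})$. To repair your argument you need either this sandwich --- which requires the leader to be pointwise optimal in value over $\Pi$, not merely a fixed point of $\Gamma$ or a maximizer of $\rho_{\hat{r}_{t}}$ as you define it --- or some other mechanism that damps the leader-drift contribution by an additional factor of $1/t$.
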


\begin{lemma}
\label{lemma3}
After $T$ time steps, the policy sequence $\pi_{1},\ldots,\pi_{T}$ given by OMDP-PI satisfies
\begin{equation*}
\sum_{t=1}^{T}\rho_{r_t}(\pi_{t})-\bE_{\pi_{t}}\left[\sum_{t=1}^{T}r_{t}(\bm{s}_{t},\bm{a}_{t})\right]\leq 2\tau^{3}\ln{T}+2\tau^{3}+2\tau^{3}e^{(\tau+2)}+2\tau.
\end{equation*}
\end{lemma}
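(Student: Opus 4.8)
The plan is to recognize that this third regret term measures only the gap created by the algorithm's state distribution not yet having reached the stationary distribution of the policy currently in force, and then to control the accumulation of that gap by a contraction-plus-drift recursion. Write $q_t(\bm{s})=p(\bm{s}_t=\bm{s}\,|\,\aL)$ for the law of the state visited at time $t$. Because $\rho_{r_t}(\pi_t)=\sum_{\bm{s}}d_{\pi_t}(\bm{s})\bar r_t(\bm{s})$ and $\bE_{\pi_t}[r_t(\bm{s}_t,\bm{a}_t)\,|\,\aL]=\sum_{\bm{s}}q_t(\bm{s})\bar r_t(\bm{s})$, where $\bar r_t(\bm{s})=\sum_{\bm{a}}\pi_t(\bm{a}|\bm{s})r_t(\bm{s},\bm{a})\in[0,1]$, each summand equals $\sum_{\bm{s}}(d_{\pi_t}(\bm{s})-q_t(\bm{s}))\bar r_t(\bm{s})$ and is bounded in absolute value by $\|q_t-d_{\pi_t}\|_1$. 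Thus it suffices to bound $\sum_{t=1}^T\|q_t-d_{\pi_t}\|_1$.

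First I would set $\beta_t=\|q_t-d_{\pi_t}\|_1$ and derive a recursion for it. Using the one-step propagation $q_{t+1}=P^{\pi_t}q_t$ (with $P^{\pi_t}$ the transition operator induced by $\pi_t$) together with the fixed-point identity $d_{\pi_t}=P^{\pi_t}d_{\pi_t}$, I split $q_{t+1}-d_{\pi_{t+1}}=P^{\pi_t}(q_t-d_{\pi_t})+(d_{\pi_t}-d_{\pi_{t+1}})$. Applying the triangle inequality and then the mixing condition of Assumption~\ref{mix} to the first piece contracts it by $e^{-1/\tau}$, so that $\beta_{t+1}\le e^{-1/\tau}\beta_t+\|d_{\pi_{t+1}}-d_{\pi_t}\|_1$.

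The heart of the argument is bounding the drift $\|d_{\pi_{t+1}}-d_{\pi_t}\|_1$ by something of order $1/t$. Applying the same contraction to the two stationarity equations gives $\|d_{\pi_{t+1}}-d_{\pi_t}\|_1\le (1-e^{-1/\tau})^{-1}\|(P^{\pi_{t+1}}-P^{\pi_t})d_{\pi_t}\|_1\le (1-e^{-1/\tau})^{-1}\|\pi_{t+1}-\pi_t\|_1$, where the last step uses that the rows of $P^{\pi}$ are reward-independent mixtures of the fixed transition kernel. It then remains to show $\|\pi_{t+1}-\pi_t\|_1=O(1/t)$. By Assumptions~\ref{AssumptionPolicy} and~\ref{AssumptionPolicyr} and the triangle inequality, $\|\pi_{t+1}-\pi_t\|_1\le \xi\|V_t-V_{t-1}\|_\infty+\xi\|\hat r_t-\hat r_{t-1}\|_\infty$. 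The averaging definition of $\hat r_t$ and the reward bound give $\|\hat r_t-\hat r_{t-1}\|_\infty=O(1/t)$, while the value update $V_t-V_{t-1}=\tfrac{1}{t}(\vL_{r_t}^{\pi_t}-V_{t-1})$ gives $\|V_t-V_{t-1}\|_\infty=O(1/t)$ once one has the a priori bound $\|\vL_r^{\pi}\|_\infty=O(\tau)$; the latter is a geometric-series consequence of Assumption~\ref{mix} (each deviation $\bE[r(\bm{s}_i,\bm{a}_i)]-\rho_r(\pi)$ decays like $e^{-(i-1)/\tau}$), and it also bounds $\|V_{t-1}\|_\infty$ since $V_{t-1}$ is a convex combination of such value functions. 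Chaining these estimates makes the drift $O(1/t)$ with constant of order $\tau^2$.

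Finally I would solve the recursion and sum. Unrolling $\beta_{t+1}\le e^{-1/\tau}\beta_t+c/t$ gives $\beta_t\le e^{-(t-1)/\tau}\beta_1+c\sum_{k=1}^{t-1}e^{-(t-1-k)/\tau}/k$; summing over $t$ and exchanging the order of summation, each term $1/k$ collects total geometric weight $(1-e^{-1/\tau})^{-1}$, so $\sum_{t=1}^T\beta_t\le (1-e^{-1/\tau})^{-1}(\beta_1+c(\ln T+1))$, which is of the claimed form $O(\tau^3\ln T)$ after substituting $(1-e^{-1/\tau})^{-1}=O(\tau)$ and $c=O(\tau^2)$. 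The main obstacle is the drift bound of the previous paragraph: it must chain the mixing contraction, both Lipschitz properties, and the a priori value-function bound while keeping every constant explicit, and this is exactly where the $\tau^3$ coefficient is produced. The remaining delicate point is the early, pre-mixing regime—where $\beta_t$ admits only the trivial bound $\beta_t\le 2$ and the $O(1/t)$ drift estimate is not yet informative—whose careful treatment is what accounts for the large additive constant in the stated bound.
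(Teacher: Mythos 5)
Your proposal is correct and takes essentially the same route as the paper: both reduce the term to $\sum_{t=1}^{T}\|d_{\aL,t}-d_{\pi_{t}}\|_{1}$ (using $r_{t}\in[0,1]$) and control it by combining the mixing contraction of Assumption~\ref{mix} with the $O(1/t)$ per-step policy drift that follows from Assumptions~\ref{AssumptionPolicy}--\ref{AssumptionPolicyr}, the reward averaging, and the $1/t$ step size. The only difference is bookkeeping---you run a moving-target recursion on $\|q_{t}-d_{\pi_{t}}\|_{1}$ with a stationary-distribution perturbation bound costing an extra $(1-e^{-1/\tau})^{-1}$, whereas the paper unrolls against the fixed target $d_{\pi_{t}}$ and accumulates $\|\pi_{k}-\pi_{t}\|_{1}\approx\ln\frac{t-1}{k-1}$ inside the geometric sum---and both yield the same $O(\tau^{3}\ln T)$ order (indeed, your explicit tracking of the Lipschitz constant $\xi$ in the drift is arguably more careful than the paper's, which silently drops it).
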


Summarizing these bounds, we can obtain Theorem~\ref{theorem1}.
\subsection{OMDP-PI algorithm with approximation}

When considering large (continuous) state space in online MDP problems, it is essential to apply a function approximation technique. \citet{Tsi99} introduced the linear function approximation of the value function for stochastic policies. A significant benefit of the linear approximation is that the convergence guarantee is provided \citep*{Tsi99}. Below we present their theoretical results for discrete (possibly continuous) state space.

By following the same idea as \citet{Tsi99}, we use the linear approximation of the value function:
\begin{equation*}
\hat{\vL}(\bm{s})=\bm{\theta}^{\top}\bm{\phi}(s),
\end{equation*}
where $\bm{\theta}\in\Theta$ is the approximation parameter, and $\Theta\subset\mathbb{R}^{K}$ is the parameter space, $\bm{\phi}(\bm{s})$ is the basis function. At each time step $t$, the value function $\vL_{r_{t}}^{\pi_{t}}(\bm{s})$ is approximated as follows:
\begin{itemize}
\item for $i=1,2,\ldots$ until convergence
\begin{enumerate}
\item Observe the state $\bm{s}_{i}$.
\item Take action $\bm{a}_{i}$ following $\pi_{t}$.
\item Observe the next state $\bm{s}_{i+1}$ and the reward $r_{t}(\bm{s}_{i},\bm{a}_{i})$
\item Update the approximation parameter as
\begin{equation*}
\bm{\theta}_{i+1}=\bm{\theta}_{i}+\alpha_{t}(r_{t}(\bm{s}_{i},\bm{a}_{i})-\hat{\rho}_{r_{t}}^{\pi_{t}}(i)+\bm{\theta}_{i}^{\top}\bm{\phi}(\bm{s}_{i+1})-\bm{\theta}_{i}^{\top}\bm{\phi}(\bm{s}_{i}))
\end{equation*}
and
\begin{equation*}
\hat{\rho}_{r_{t}}^{\pi_{t}}(i+1)=(1-\alpha_{t})\hat{\rho}_{r_{t}}^{\pi_{t}}(i)+\alpha_{t}r_{t}(\bm{s}_{i},\bm{a}_{i}),
\end{equation*}
where the step size $\alpha_{t}$ satisfies 
\begin{equation*}
\sum_{t=1}^{\infty}\alpha_{t}=\infty~\mathrm{and}~\sum_{t=1}^{\infty}\alpha_{t}^{2}<\infty.
\end{equation*}
\end{enumerate}
\end{itemize}
The approximation parameter was proved to converge to the unique solution of the following equation \citep*{Tsi99}:
\begin{equation}
\label{solution}
\mathcal{P}(R_{t}(\pi_{t})-\bm{e}_{|S|}\rho_{r_{t}}(\pi_{t})+P^{\pi_{t}}\bm{\theta}^{\top}\bm{\phi})=\bm{\theta}^{\top}\bm{\phi},
\end{equation} 
where $R_{t}(\pi_{t})$ is the $|S|$-dimensional column vector whose $\bm{s}$th element is $r_{t}(\bm{s},\pi_{t})=\sum_{\bm{a}\in A}\pi_{t}(\bm{a}|\bm{s})r_{t}(\bm{s},\bm{a})$. $\mathcal{P}$ is the projection operator such that for all $V\in\mathbb{R}^{|S|}$,
\begin{equation*}
\mathcal{P}(V)=\argmin_{\bar{V}\in\{\bm{\theta}^{\top}\bm{\phi}|\bm{\theta}\in\mathbb{R}^{K}\}}\|V-\bar{V}\|_{D^{\pi_{t}}},
\end{equation*}
where $D^{\pi_{t}}$ is the diagonal matrix with the stationary distribution on the diagonal.
It is clear that $\mathcal{P}$ is the projection from the $|S|$-dimensional real space to the space spanned by the basis function. The approximation sequence $\hat{\rho}_{r_{t}}^{\pi_{t}}(1),\hat{\rho}_{r_{t}}^{\pi_{t}}(2),\ldots$ satisfies
\begin{equation*}
\lim_{i\rightarrow\infty}\hat{\rho}_{r_{t}}^{\pi_{t}}(i)\rightarrow\rho_{r_{t}}(\pi_{t}), ~\mathrm{with~probability}~1.
\end{equation*}
Furthermore, by using Theorem 3 in \citet{Tsi99}, the approximation error is bounded as
\begin{equation*}
\|(\bm{I}_{|S|}-\bm{e}_{|S|}d_{\pi_{t}}^{\top})\bm{\theta}^{*\top}_{t}\bm{\phi}-\vL_{r_{t}}^{\pi_{t}}\|_{D_{\pi_{t}}}\leq\frac{1}{\sqrt{1-e^{-2/\tau}}}\inf_{\bm{\theta}\in\mathbb{R}^{K}}\|(\bm{I}_{|S|}-\bm{e}_{|S|}d_{\pi_{t}}^{\top})\bm{\theta}^{\top}\bm{\phi}-\vL_{r_{t}}^{\pi_{t}}\|_{D_{\pi_{t}}},
\end{equation*}
where $\bm{\theta}_{t}^{*}$ is the unique solution to Eq.\eqref{solution} at time step $t$. We observe that the approximation error is zero when the linear approximation model is capable of exactly recovering the true value function. 

\section{Online MDPs with stochastic iteration}

In this section, we introduce a more general framework of our proposed method for online MDPs. More specifically, we extend our algorithm to use \emph{stochastic iteration} \citep{NDPbook09} for policy evaluation together with policy improvement to solve online MDPs. 

A general form of the stochastic iteration algorithm \citep*{Szita02,Csaji08} can be expressed as
\begin{align}
V_{t}(\bm{s})&=(1-\gamma_{t}(\bm{s}))V_{t-1}(\bm{s})
% \nonumber\\
% &\phantom{=}
+\gamma_{t}(\bm{s})
\left((H_{t}V_{t-1})(\bm{s})+w_{t}(\bm{s})\right),
\label{stoiteration}
\end{align}
where $V_{t}\in \mathbb{R}^{|S|}$,
$H_{t}:\mathbb{R}^{|S|}\rightarrow\mathbb{R}^{|S|}, \forall t=1,\ldots,T$ is an operator on value functions,
$\gamma_{t}$ is the step size, and $w_{t}(\bm{s})$ is a noise term. Similarly to the Eq.\eqref{stoiteration}, we define the update rule as
\begin{equation}
\label{MDPSI}
V_{t}(\bm{s})=(1-\gamma_{t}(\bm{s}))V_{t-1}(\bm{s})+
\gamma_{t}(\bm{s})((H_{t}^{\pi_{t}}V_{t-1})(\bm{s})+w_{t}(\bm{s})),
\end{equation}
where $\pi_{t}=\Gamma(\hat{r}_{t-1},V_{t-1})$ satisfies Assumption~\ref{AssumptionPolicy} and Assumption~\ref{AssumptionPolicyr}. Note that the update rule~\eqref{MDPSI} is different from standard stochastic iteration~\eqref{stoiteration}, where the operator $H_{t}$ is replaced by the controlled operator $H_{t}^{\pi}$ which the OMDP-PI algorithm uses: $H_{t}^{\pi_{t}}V_{t-1}(\bm{s}
)=\vL_{r_{t}}^{\pi_{t}}(\bm{s})$. Additionally, we require the following assumptions.
\begin{assumption}
\label{AssumptionOper}
The controlled operator $H_{t}^{\pi}$ is a contraction mapping with respect to the value function. This means that, for two arbitrary value functions $V$ and $V'$ and two policies $\pi=\Gamma(r,V)$, $\pi'=\Gamma(r,V')$, there exist a no negative constant $\beta_{t}<1$ such that
\begin{equation*}
\|H_{t}^{\pi}V-H_{t}^{\pi}V'\|\leq \beta_{t}\|V-V'\|,
\end{equation*}
and there exist a fixed function $V_{t}^{*}$ satisfies
\begin{equation*}
H_{t}V_{t}^{*}=V_{t}^{*}.
\end{equation*}
\end{assumption}
\begin{assumption}
For all $t=1,\ldots,T$, the noisy terms $w_{t}(\bm{s})$ satisfy
\begin{equation*}
\bE[w_{t}(\bm{s})]=0~~\mathrm{and}~~\bE[w_{t}^{2}(\bm{s})]<C_{w}<\infty,
\end{equation*}
where $C_{w}$ is a positive constant.
\end{assumption}
\begin{assumption}
The step size $\gamma_{t}$ satisfies
\begin{equation*}
\sum_{t=1}^{\infty}\gamma_{t}=\infty~~\mathrm{and}~~\sum_{t=1}^{\infty}\gamma_{t}^{2}<\infty.
\end{equation*}
\end{assumption}
\begin{assumption}
\label{AssumptionOper1}
The value functions sequence $V_{1}(\bm{s}),\ldots,V_{T}(\bm{s})$ generated by Eq.\eqref{MDPSI} satisfies
\begin{equation*}
\lim_{T\rightarrow\infty}\|V_{T}^{*}-\max_{\pi\in\Pi}\vL_{\hat{r}_{T}}^{\pi}\|_{\infty}=0.
\end{equation*}
\end{assumption}
Then we have the following theorem:
\begin{theorem}
\label{the:si}
If Assumptions 5-8 hold, the value function sequence $V_{1}(\bm{s}),\ldots,V_{T}(\bm{s})$ generated by Eq.\eqref{MDPSI} satisfies
\begin{equation*}
\lim_{T\rightarrow\infty}L_{\aL}(T)=0.
\end{equation*}
\end{theorem}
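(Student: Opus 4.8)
The plan is to reuse the three-term regret decomposition of Section~3.2 and reduce the entire statement to a single claim: that the per-step reward gap $\rho_{r_t}(\pi^*)-\rho_{r_t}(\pi_t)$ tends to zero as $t\to\infty$. Under the mixing condition (Assumption~\ref{mix}), the first and third terms of the decomposition are controlled exactly as in the bound on the first term ($\le 2\tau$) and in Lemma~\ref{lemma3}, so they contribute only quantities that are constant or logarithmic in $T$ and are therefore $o(T)$. Thus everything rests on the middle term $\sum_{t=1}^{T}\bigl(\rho_{r_t}(\pi^*)-\rho_{r_t}(\pi_t)\bigr)$, and by the Ces\`aro lemma it suffices to show each summand vanishes.

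First I would establish convergence of the controlled stochastic iteration~\eqref{MDPSI}. Using Assumption~\ref{AssumptionOper} (the controlled operator $H_t^{\pi}$ is a $\beta_t$-contraction with fixed point $V_t^*$) together with the zero-mean, bounded-variance noise assumption and the Robbins--Monro step-size assumption, I would invoke the convergence theory for stochastic iterations of \citet{Szita02,Csaji08} (see also \citet{NDPbook09}) to conclude $\|V_t-V_t^*\|_\infty\to 0$ almost surely. The delicate point is that~\eqref{MDPSI} is not a standard stochastic iteration: the operator $H_t^{\pi_t}$ depends on the iterate itself through $\pi_t=\Gamma(\hat r_{t-1},V_{t-1})$, so this is a genuinely \emph{coupled} recursion. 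To push the standard argument through, I would use the Lipschitz continuity of $\Gamma$ (Assumptions~\ref{AssumptionPolicy} and~\ref{AssumptionPolicyr}) to bound how far $\pi_t$ moves when $V_{t-1}$ and $\hat r_{t-1}$ change, together with the fact that the running average obeys $\|\hat r_t-\hat r_{t-1}\|_\infty=O(1/t)$, so that the effective operator varies slowly enough to be absorbed into the asymptotically negligible terms of the convergence theorem.

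Next I would translate value-function convergence into policy and then reward convergence. Combining $\|V_t-V_t^*\|_\infty\to 0$ with Assumption~\ref{AssumptionOper1}, the triangle inequality gives $\|V_t-\max_{\pi\in\Pi}\vL_{\hat r_t}^{\pi}\|_\infty\to 0$, so the iterate converges to the optimal value function for the current average reward $\hat r_t$. Since $\pi_{t+1}=\Gamma(\hat r_t,V_t)$, Assumption~\ref{AssumptionPolicy} shows $\pi_{t+1}$ is $L_1$-close to the policy built from the optimal value function, which is precisely the average-reward optimal (``leader'') policy for $\hat r_t$; by the Lipschitz constant $C_\pi$ and the linearity of $\rho_{r}$ in $r$, this $L_1$ closeness transfers to closeness of expected average rewards, so that $\rho_{\hat r_t}(\pi_t)$ approaches $\sup_{\pi\in\Pi}\rho_{\hat r_t}(\pi)$. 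Because the best offline policy maximizes $\sum_t\rho_{r_t}(\pi)=T\rho_{\hat r_T}(\pi)$, a standard comparison relating per-round optimization of a running average to cumulative regret (as in the analysis of the lazy-FPL ``leader'' policy of \citet{Yu09}, using that $\hat r_t$ is a slowly varying average) then shows the per-step gap $\rho_{r_t}(\pi^*)-\rho_{r_t}(\pi_t)$ vanishes. Ces\`aro summation yields $L_{\aL}(T)=o(T)$.

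I expect the main obstacle to be the first step: rigorously justifying convergence of the coupled recursion~\eqref{MDPSI}, where the operator is driven by the very iterate it updates. The usual contraction-plus-martingale arguments assume a fixed or exogenously varying operator, so the core technical work is to show that the feedback from $V_{t-1}$ into $H_t^{\pi_t}$ through $\Gamma$ is a vanishing perturbation---this is exactly where Assumptions~\ref{AssumptionPolicy} and~\ref{AssumptionPolicyr} combine with the $O(1/t)$ drift of $\hat r_t$. A secondary subtlety I would flag explicitly is the reading of the conclusion: since Assumption~\ref{AssumptionOper1} is purely qualitative (a limit with no rate), the argument delivers the vanishing of the \emph{time-averaged} regret via Ces\`aro rather than a rate, so the stated limit $\lim_{T\to\infty}L_{\aL}(T)=0$ is to be understood in the no-regret sense $L_{\aL}(T)/T\to 0$; indeed the $\le 2\tau$ bound on the first decomposition term is a fixed nonvanishing constant, which rules out literal vanishing of the cumulative regret.
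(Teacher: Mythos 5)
Your proposal follows essentially the same route as the paper: invoke the stochastic-approximation convergence theory of \citet{Csaji08} for the recursion \eqref{MDPSI}, combine it with Assumption~\ref{AssumptionOper1} and the Lipschitz continuity of $\Gamma$ to conclude that $\pi_t$ converges in $L_1$ to the best offline policy, and then feed this into the three-term decomposition used for Theorem~\ref{theorem1}. The two subtleties you flag are genuine and are glossed over in the paper's one-paragraph proof: the operator $H_t^{\pi_t}$ really is coupled to the iterate through $\pi_t=\Gamma(\hat r_{t-1},V_{t-1})$, which the bare citation of Theorem 20 of \citet{Csaji08} does not by itself resolve, and the literal claim $\lim_{T\rightarrow\infty}L_{\aL}(T)=0$ cannot hold as written, since the first and third terms of the decomposition contribute $2\tau$ and $O(\ln T)$ respectively, so the conclusion must be read as vanishing \emph{average} regret, exactly as you note.
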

\begin{proof}
By using Theorem 20 in \citet{Csaji08}, we have
\begin{equation*}
\lim_{T\rightarrow\infty}\|V_{T}-\vL_{\hat{r}_{T}}^{\pi^{*}}\|_{\infty}=0,
\end{equation*}
where $\pi^{*}=\argmax_{\pi\in\Pi}\rho_{\hat{r}_{T}}(\pi)$ is the best offline policy.
Since $\pi_{T+1}=\Gamma(\hat{r}_{T},V_{T})$ and $\pi_{T}$, we obtain
\begin{align*}
\lim_{T\rightarrow\infty}\|\pi_{T}-\pi^{*}\|_{1}&\leq \lim_{T\rightarrow\infty}(\|\pi_{T+1}-\pi^{*}\|_{1}+\|\pi_{T+1}-\pi_{T}\|_{1})\\
&\leq
\lim_{T\rightarrow\infty}(\xi\|V_{T}-\vL_{\hat{r}_{T}}^{\pi^{*}}\|_{\infty}
+\|\pi_{T+1}-\pi_{T}\|_{1})\\
&\leq
\lim_{T\rightarrow\infty}(\xi\|V_{T}-\vL_{\hat{r}_{T}}^{
pi^{*}}\|_{\infty}+\xi\|\hat{r}_{T+1}-\hat{r}_{T}\|_{\infty}+\xi\|V_{T}-V_{T-1}\|_{\infty})\\
&=0.
\end{align*}
In the above derivation we used $\lim_{T\rightarrow\infty}\|V_{T}-V_{T-1}\|_{\infty}=0$, which can be obtained by the update rule \eqref{MDPSI}. The above result shows that the policies generated by the value sequence converges to the best offline policy as $T$ goes to infinity. Hence, the claimed result hold by following the same line as the proof of Theorem~\ref{theorem1}.
%\begin{remark}
%It is straightforward to verify the operator $H_{t}V_{t}(\bm{s})=\vL_{r_{t}}^{\pi_{t}}$ with $\pi_{t}=\Gamma(\hat{r}_{t-1},V_{t-1})$ satisfies Assumption~\ref{AssumptionOper} by using Proposition~\ref{contraction}. Since $H_{T}^{\pi_{T}}(\bm{s})=\vL_{r_{T}}^{\pi_{T}}(\bm{s})$, we have shown that
%\end{remark}
\end{proof}

Many popular reinforcement learning algorithms based on value functions such as
the \emph{temporal difference (TD) learning} algorithm \citep*{NDPbook09,Sutton98,Sutton88learningto} and 
the \emph{SARSA} algorithm \citep*{NDPbook09,Sutton98}
can be regarded as stochastic iteration. Theorem~\ref{the:si} shows that any stochastic iteration method that satisfies Assumptions 5-8 could be used to derive an online MDPs algorithm with sublinear regret.

\section{Experiments}
In this section, we experimentally illustrate the behavior of the proposed online algorithm.

\begin{figure*}[t]
\centering
\subfigure[OMDP-PI, t=25000]{
\includegraphics[width=0.35\textwidth]{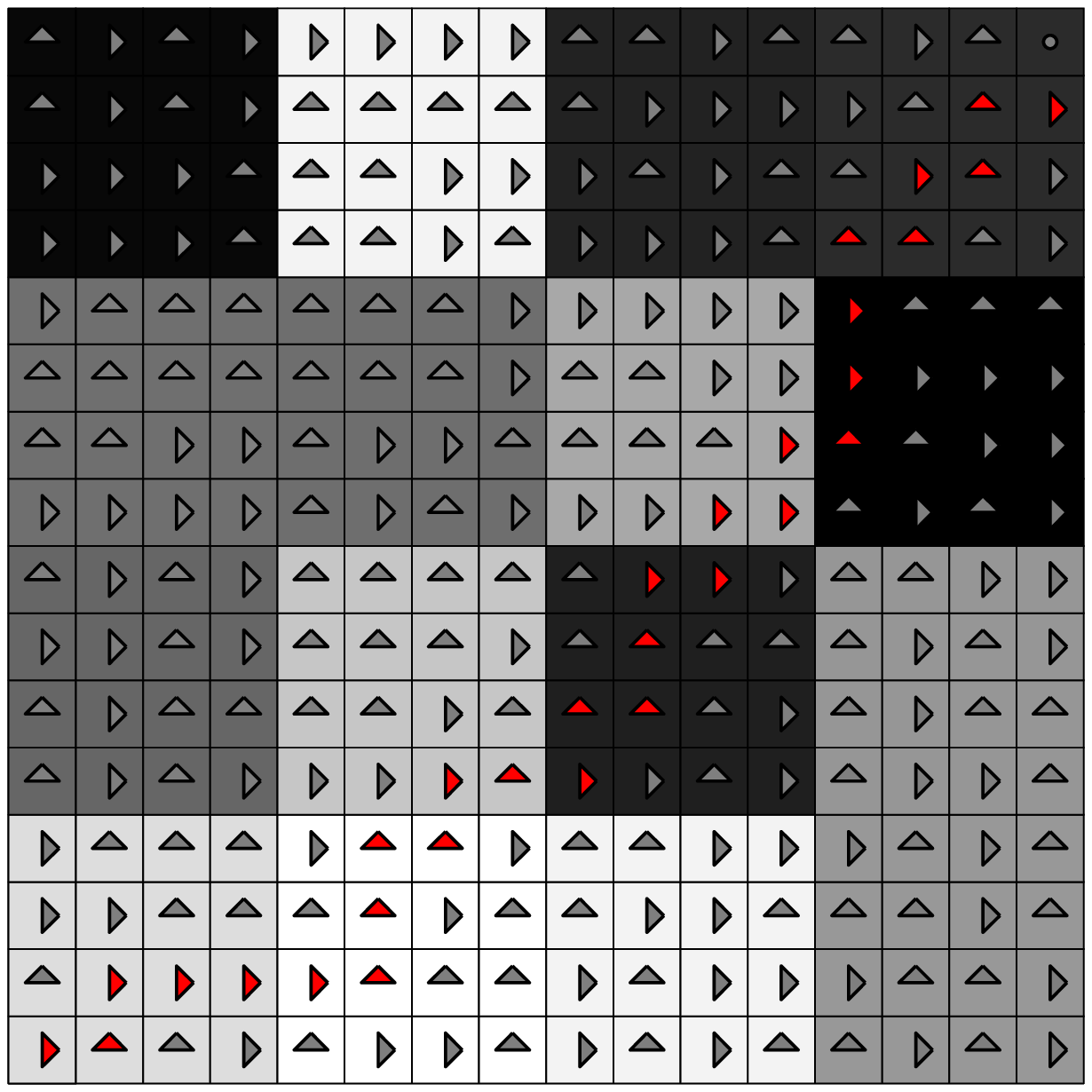}
\label{fig:wTD01}
}
\subfigure[Best Offline, t=25000]{
\includegraphics[width=0.35\textwidth]{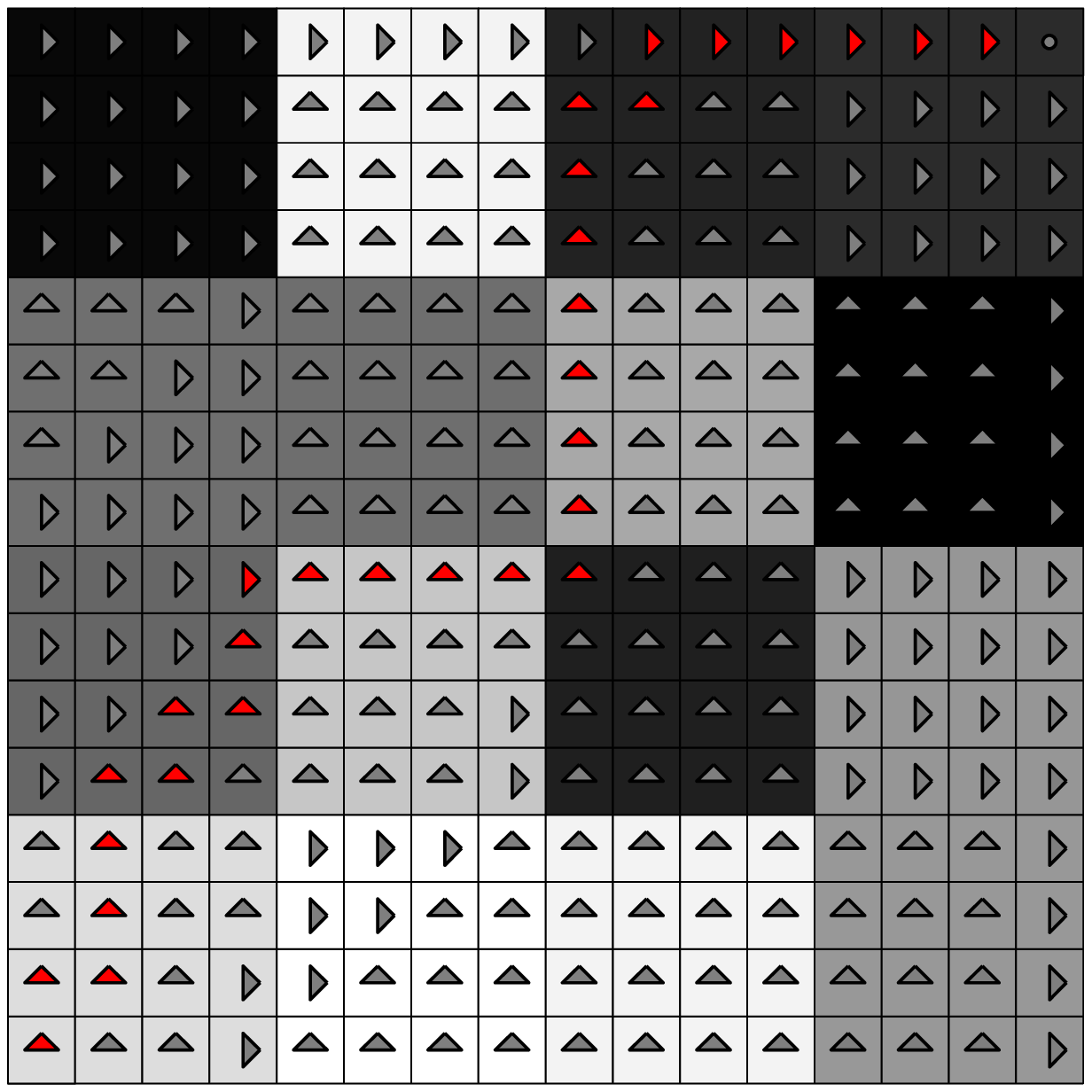}
\label{fig:wOffline01}
}
\subfigure[OMDP-PI, t=50000]{
\includegraphics[width=0.35\textwidth]{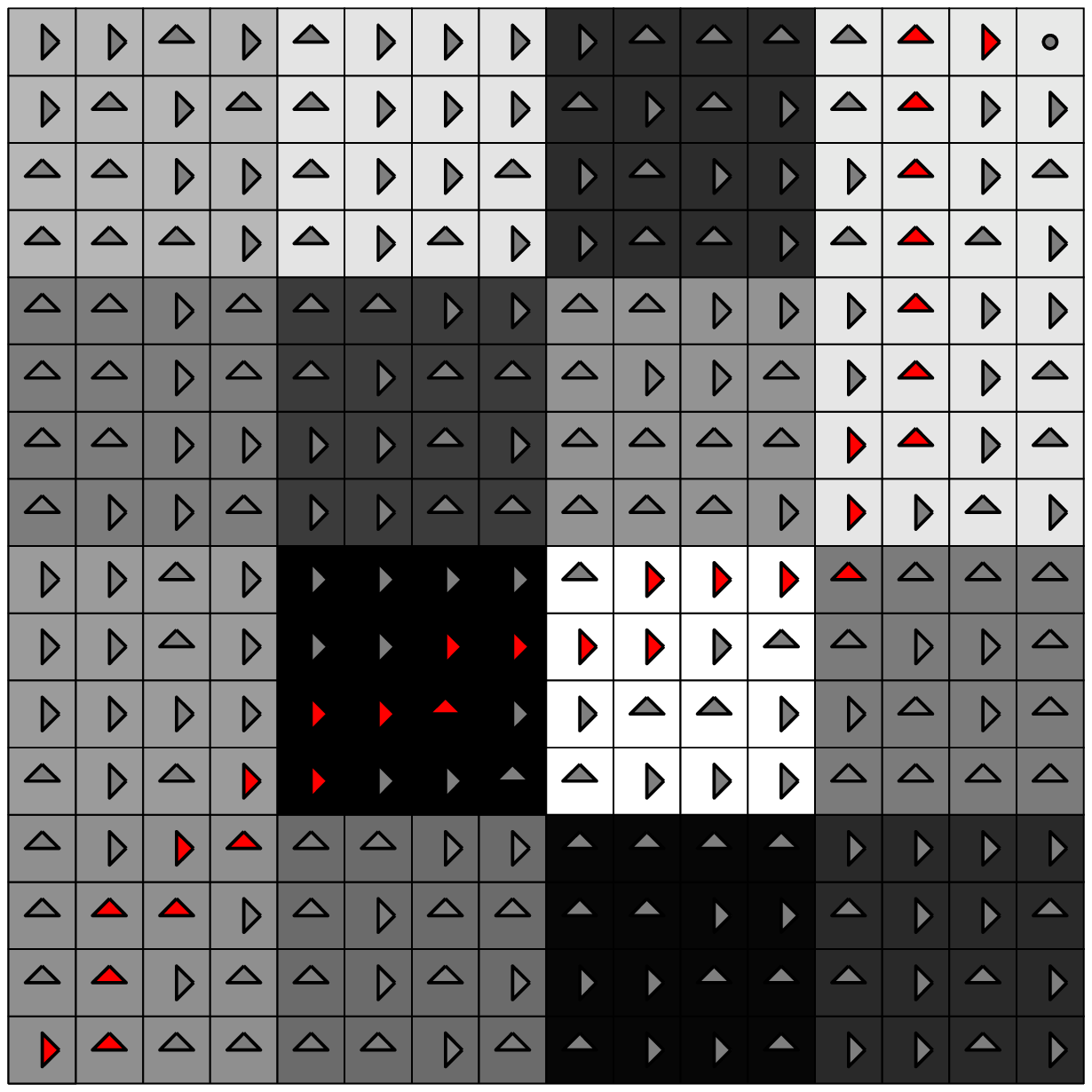}
\label{fig:wTD02}
}
\subfigure[Best Offline, t=50000]{
\includegraphics[width=0.35\textwidth]{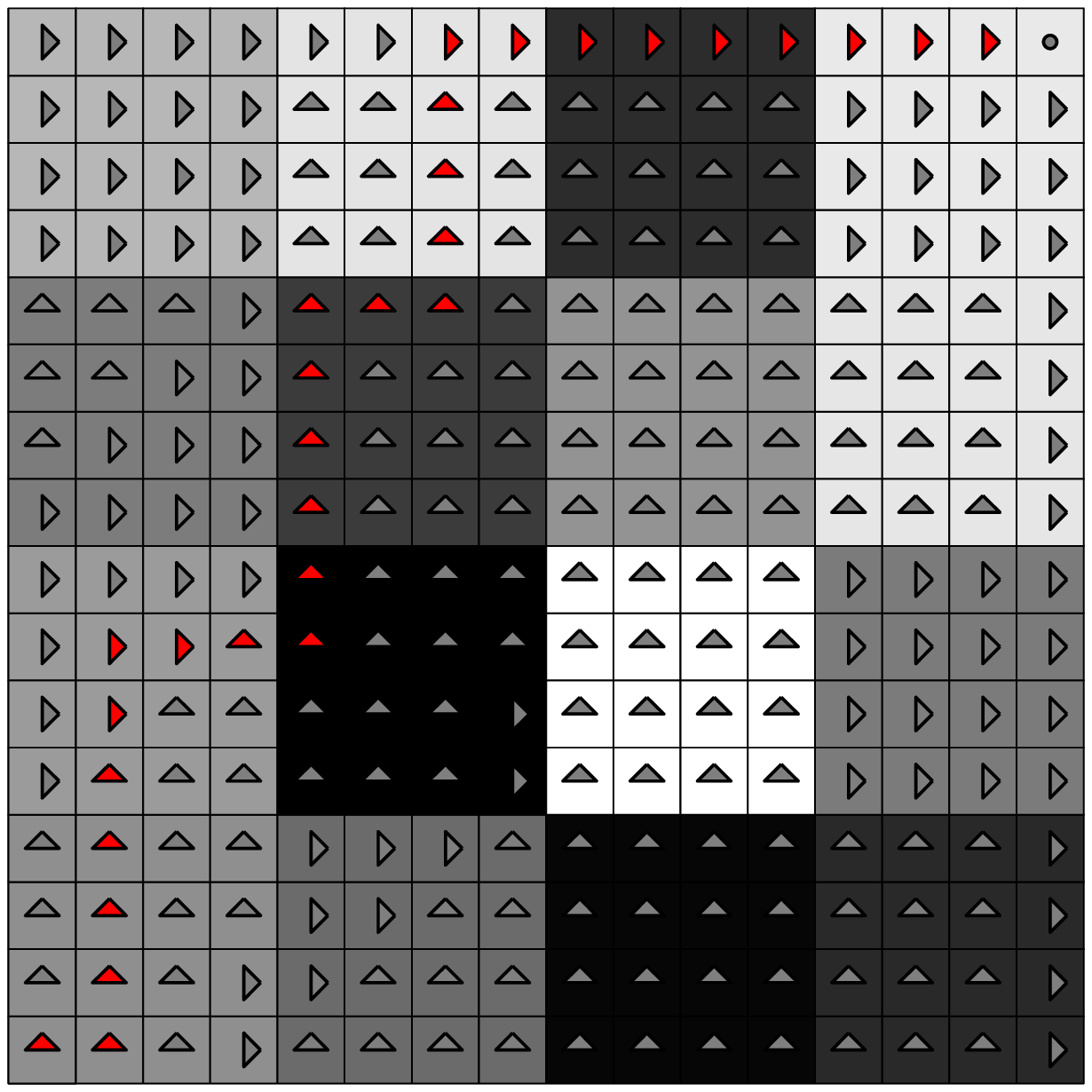}
\label{fig:wOffline02}
}
\subfigure[OMDP-PI, t=75000]{
\includegraphics[width=0.35\textwidth]{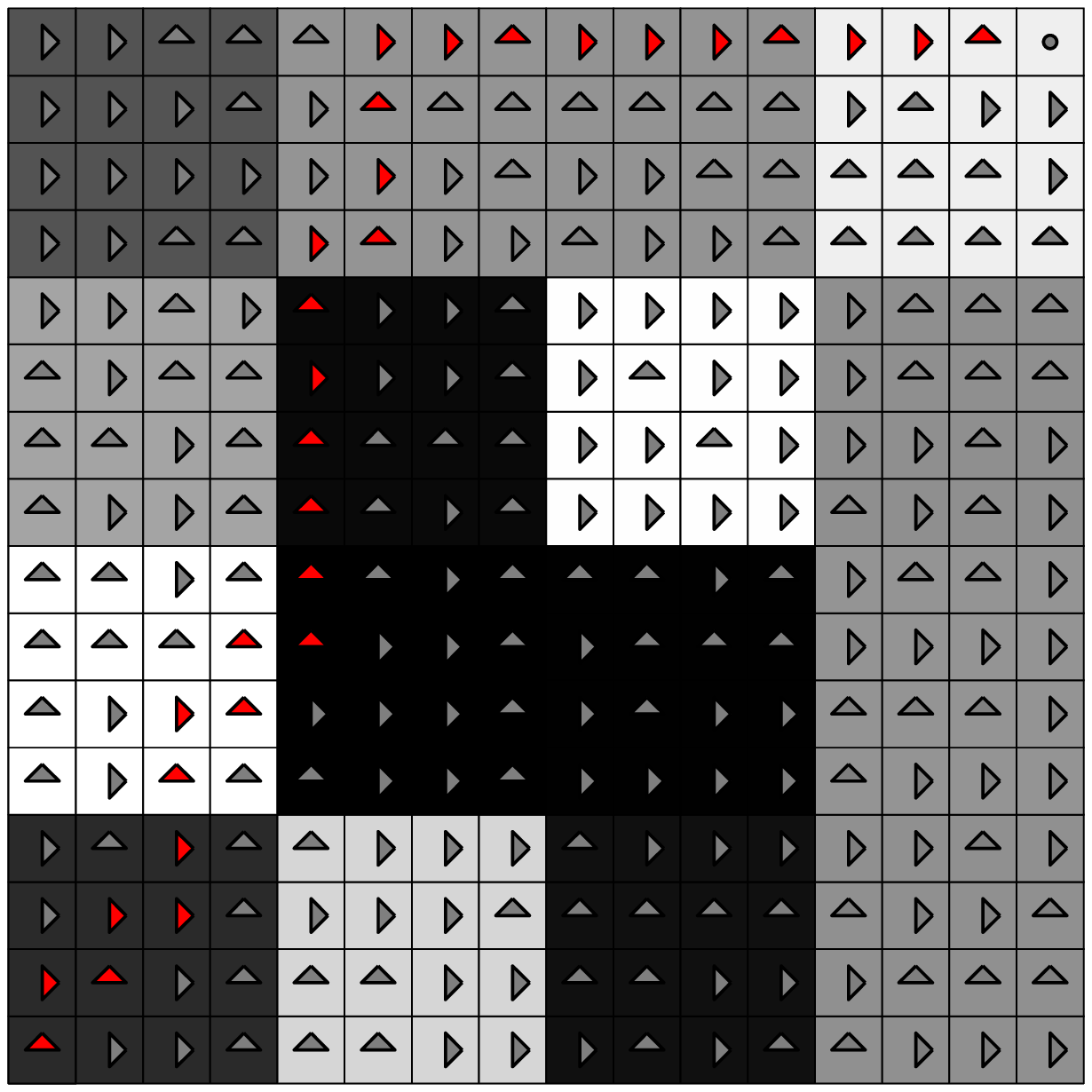}
\label{fig:wTD03}
}
\subfigure[Best Offline, t=75000]{
\includegraphics[width=0.35\textwidth]{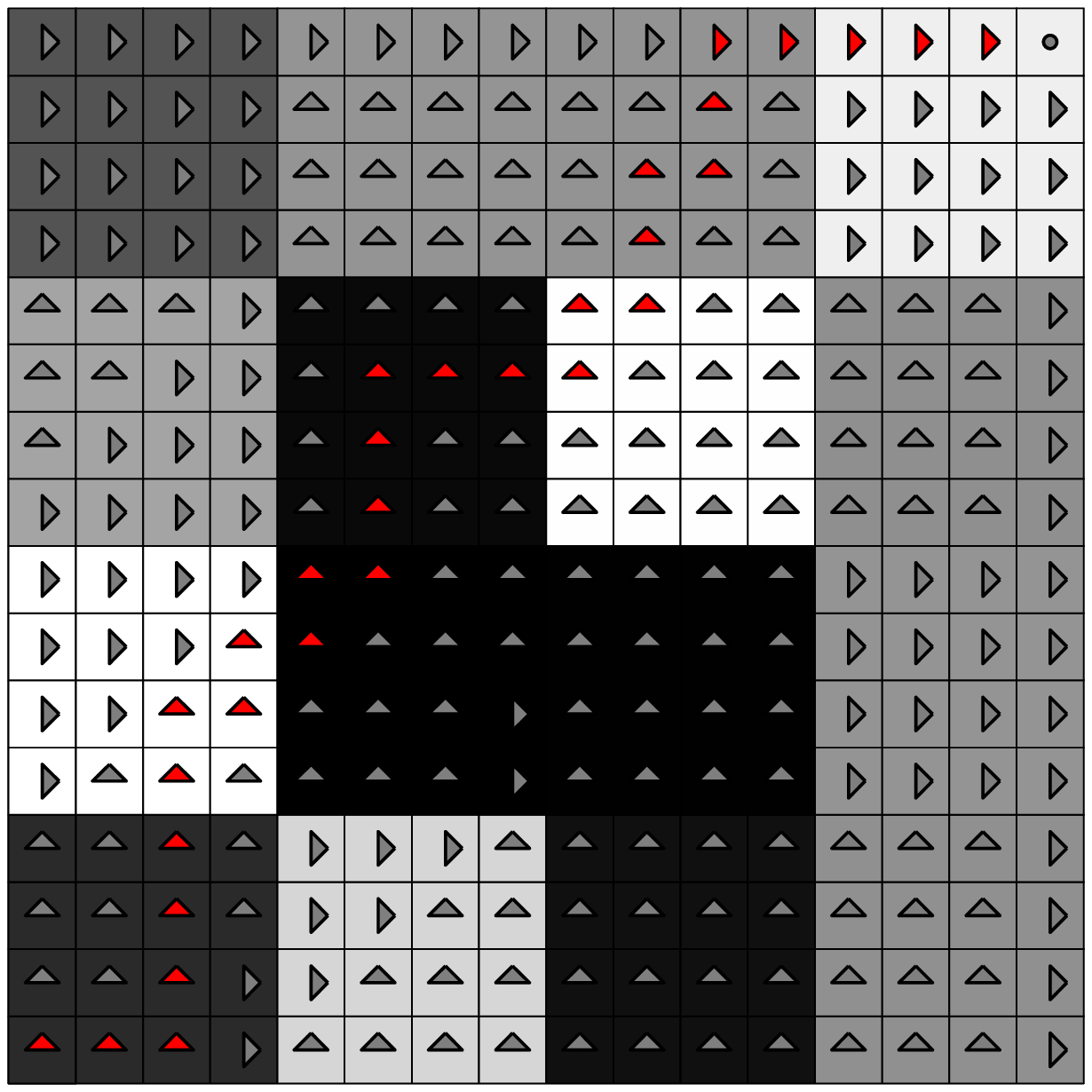}
\label{fig:wOffline03}
}
%\subfigure[TD(0), t=100000]{
%\includegraphics[width=0.45\textwidth]{wTD04}
%\label{fig:wTD04}
%}
%\subfigure[Best Offline, t=100000]{
%\includegraphics[width=0.45\textwidth]{wOffline04}
%\label{fig:wOffline04}
%}

\caption{Experiments on grid worlds.}
\label{fig:gridworlds}
\end{figure*}

\begin{figure}[t]
\centering
\includegraphics[width=0.85\textwidth]{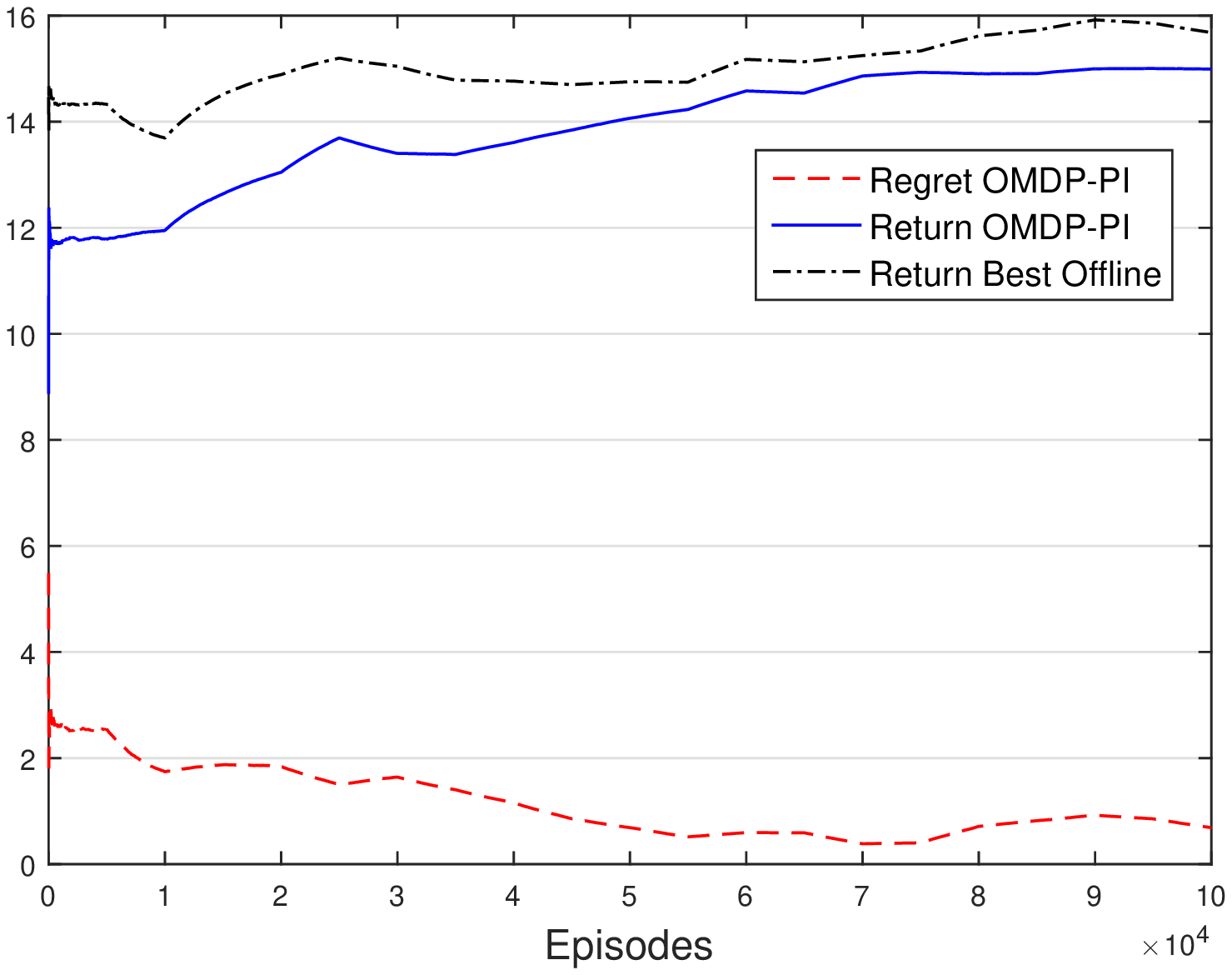}
\caption{Regrets and rewards.}
\label{fig:regret}
\end{figure}

The goal of the \emph{grid world} problem is to let an agent walk in the grid environment
from the start block to the destination block.
We conduct experiments on the grid world based
on the \emph{Inverse Reinforcement Learning} (IRL) toolkit\footnote{http://graphics.stanford.edu/projects/gpirl}\citep*{levine11}.

In our experiments, the grid world has $16\times16$ states with $2$ actions in each state,
which correspond to moving east and north.
Each action has a 30\% chance of moving in the other direction.
The 256 states are further joined into $16$ super-grids,
each of which consists of $4\times4$ states with the same reward.

In each episode, the agent tries to find a trajectory from the south-west corner
to the north-east corner, with the highest cumulative rewards.
In the north or east border states, the agent can only move east or north.
We set $T=100000$ and randomly change the rewards at episodes
$t=1,5001,10001,...,95001$. 
The proposed algorithm and the best offline algorithm
(obtained using the standard MDP solver included in the IRL toolkit) 
are run on the grid world.

Figure~\ref{fig:gridworlds} shows the trajectories found by the offline policy and proposed OMDP-PI algorithm
at episodes $t=25000,50000,75000,100000$.
The darker the state is, the lower average reward it has.
The direction of triangles shows the obtained policies.
The states with red triangles indicate trajectories of the agent.
Figure~\ref{fig:regret} shows the average regret and cumulative reward
as functions of the number of episodes. 

The results in Figure~\ref{fig:regret} show that
the regret of the OMDP-PI algorithms vanishes,
substantiating that our theoretical analysis is valid.

\section{Comparison with previous works}
\begin{itemize}
\item Expert algorithm based methods \citep*{NIPS05,MOOR09,Neu10b,Neu14}: The basic idea of expert algorithm based methods is to put an expert algorithm in every state. By taking a close look at these algorithms, the idea does not take advantage of the state structure of the MDP problem. The OMDP-PI algorithm  can be easily combined with function approximation. Since it is popular to simplify the large state space problem by using the linear span of the state features, the OMDP-PI algorithm is natural to handle the large state space online MDPs.

\item Online linear optimization based methods \citep*{Zimin13,Dick14}: By introducing the stationary occupancy measures over state-action pairs, the online MDP problems can be solved as the online linear optimization problems. The $O(\sqrt{T})$ regret bounds are proved for fixed time horizon online MDPs. More specifically, the step size parameter is optimized by using the length of the time horizon $T$. Moreover, the stationary occupancy measures are defined over finite state and action spaces, and thus it is not clear that whether the state-action probability density function could be learned by using their propose methods without parametrization. The OMDP-PI algorithm with function approximation parameterized the state-action density through the linear model of the value function. 

\item Linear programming based method \citep*{Yu09}: Our OMDP-PI is motivated by the Lazy-FPL algorithm, which solves a linear programming problem at the end of each phase. Instead of obtaining the best policy by the linear programming, the OMDP-PI algorithm obtains the value function of the  current policy which is much more efficient than the linear programming. As we showed in the update rule, the policy evaluation could be performed in $O(|S|^{2.3728639}+|S|^{2}|A|)$ where the matrix inversion could be solved in $O(|S|^{2.3728639})$ \citep*{Le14}.
\end{itemize}

\section{Conclusion and future work}
As a generalization of MDP, online MDP is a promising model which can handle many online problem with guaranteed performance. In this paper, we proposed a policy iteration algorithm with a closed form update rule
for online MDP problems.
We showed that the proposed algorithm 
achieves sublinear regret for a policy set.
%achieves regret $O(L^{2}T^{\frac{3}{4}})$  with full information of reward functions. 
A notable fact is that the proposed algorithm is still practical for online MDP problems
with large (continuous) state space.
We showed that the propose algorithm can be easily combined with function approximation with theoretical guarantee. We illustrated the performance of the proposed algorithm through grid-world experiments. 

Our future work will extend the proposed algorithm to the bandit feedback scenario, where the full information of the reward function is not revealed to the agent.
Exploring other stochastic iteration methods
such as the \emph{SARSA} algorithm \citep*{NDPbook09,Sutton98}
and the \emph{value iteration} algorithm \citep*{Csaji08,Szita02}
is also an important future direction.

\bibliographystyle{apa}

\appendix
\section*{Appendix}

\section{Proof of Gibbs policy}
\label{app:Gibbspolicy}
\begin{proof}
The KL divergence of two Gibbs policies $\pi$, $\pi'$ generated by two different value function $V$ and $V'$ is
\begin{align*}
&D(\pi(\cdot|\bm{s})\parallel\pi'(\cdot|\bm{s}))\\
&=
\bE_{\pi}\left[\sum_{\bm{s}'\in S}
\frac{1}{\kappa} p(\bm{s}'|\bm{s},\bm{a})\left(V(\bm{s}')-V'(\bm{s}')\right)\right]\\
&\phantom{=}+
\log{\frac{\sum_{\bm{a}'\in A}\exp{\frac{1}{\kappa}\left(\left(r(\bm{s},\bm{a}')
+\sum_{\bm{s}'\in S}p(\bm{s}'|\bm{s},\bm{a}')V'(\bm{s}')\right)\right)}}{\sum_{\bm{a}'\in A}\exp{\frac{1}{\kappa}\left(\left(r(\bm{s},\bm{a}')
+\sum_{\bm{s}'\in S}p(\bm{s}'|\bm{s},\bm{a}')V(\bm{s}')\right)\right)}}}\\
&=
\bE_{\pi}\left[\frac{1}{\kappa}\sum_{\bm{s}'\in S}
p(\bm{s}'|\bm{s},\bm{a})\left(V(\bm{s}')-V'(\bm{s}')\right)\right]\\
&\phantom{=}+
\log{\frac{\sum_{\bm{a}'\in A}\exp{\frac{1}{\kappa}(r(\bm{s},\bm{a}')+\sum_{\bm{s}'\in S}p(\bm{s}'|\bm{s},\bm{a}')V(\bm{s}')+\sum_{\bm{s}'\in S}p(\bm{s}'|\bm{s},\bm{a}')(V'(\bm{s}')-V(\bm{s}')))}}{\sum_{\bm{a}'\in A}\exp{\frac{1}{\kappa}(r(\bm{s},\bm{a}')+\sum_{\bm{s}'\in S}p(\bm{s}'|\bm{s},\bm{a}')V(\bm{s}'))}}}\\
&=
\bE_{\pi}\left[\frac{1}{\kappa}\sum_{\bm{s}'\in S}
p(\bm{s}'|\bm{s},\bm{a})\left(V(\bm{s}')-V'(\bm{s}')\right)\right]\\
&\phantom{=}+
\log{\frac{\sum_{\bm{a}'\in A}\exp{\frac{1}{\kappa}\left(\left(r(\bm{s},\bm{a}')
+\sum_{\bm{s}'\in S}p(\bm{s}'|\bm{s},\bm{a}')V(\bm{s}')\right)\right)}\exp{\frac{1}{\kappa}\sum_{\bm{s}'\in S}p(\bm{s}'|\bm{s},\bm{a}')(V'(\bm{s}')-V(\bm{s}'))}}{\sum_{\bm{a}'\in A}\exp{\frac{1}{\kappa}\left(\left(r(\bm{s},\bm{a}')
+\sum_{\bm{s}'\in S}p(\bm{s}'|\bm{s},\bm{a}')V(\bm{s}')\right)\right)}}}\\
&=
\bE_{\pi}\left[\frac{1}{\kappa}\sum_{\bm{s}'\in S}
p(\bm{s}'|\bm{s},\bm{a})\left(V(\bm{s}')-V'(\bm{s}')\right)\right]\\
&~~~+\log{\bE_{\pi}\left[\frac{1}{\kappa}\sum_{\bm{s}'\in S}
p(\bm{s}'|\bm{s},\bm{a})\left(V'(\bm{s}')-V(\bm{s}')\right)\right]}\\
&\leq
\frac{\|\frac{1}{\kappa}\sum_{\bm{s}'\in S}p(\bm{s}'|\bm{s},\bm{a})\left(V(\bm{s}')-V'(\bm{s}')\right)\|^{2}_{\infty}}{4}.
\end{align*}
From the Pinsker's inequality, there is
\begin{equation*}
\|\pi(\cdot|\bm{s})-\pi'(\cdot|\bm{s})\|_{1}\leq
\frac{\|V(\bm{s})-V'(\bm{s})\|_{\infty}}{\sqrt{2}\kappa},
\end{equation*}
Similarly, the KL divergence of two Gibbs policies $\pi$, $\pi'$ generated by two different reward function $r(\bm{s},\bm{a})$ and $r'(\bm{s},\bm{a})$ is
\begin{align*}
&D(\pi(\cdot|\bm{s})||\pi'(\cdot|\bm{s}))\\
&=\bE_{\pi}\left[\frac{1}{k}(r(\bm{s},\cdot)-r'(\bm{s},\cdot))\right]\\
&~~~+\log{\frac{\sum_{\bm{a}'\in A}\exp\frac{1}{\kappa}(r'(\bm{s},\bm{a}')+\sum_{\bm{s}'\in S}p(\bm{s}'|\bm{s},\bm{a}')V(\bm{s}'))}{\sum_{\bm{a}'\in A}\exp\frac{1}{\kappa}(r(\bm{s},\bm{a}')+\sum_{\bm{s}'\in S}p(\bm{s}'|\bm{s},\bm{a}')V(\bm{s}'))}}\\
&=
\bE_{\pi}\left[\frac{1}{k}(r(\bm{s},\cdot)-r'(\bm{s},\cdot))\right]\\
&~~~+
\log{\frac{\sum_{\bm{a}'\in A}\exp\frac{1}{\kappa}(r(\bm{s},\bm{a}')+\sum_{\bm{s}'\in S}p(\bm{s}'|\bm{s},\bm{a}')V(\bm{s}')+r'(\bm{s},\bm{a}')-r(\bm{s},\bm{a}'))}{\sum_{\bm{a}'\in A}\exp\frac{1}{\kappa}(r(\bm{s},\bm{a}')+\sum_{\bm{s}'\in S}p(\bm{s}'|\bm{s},\bm{a}')V(\bm{s}'))}}\\
&=\bE_{\pi}\left[\frac{1}{\kappa}(r(\bm{s},\cdot)-r'(\bm{s},\cdot))\right]\\
&~~~+\log{\bE_{\pi}\left[\frac{1}{\kappa}(r'(\bm{s},\cdot)-r(\bm{s},\cdot))\right]}\\
&\leq
\frac{\|\frac{1}{\kappa}(r(\bm{s},\cdot)-r'(\bm{s},\cdot))\|^{2}_{\infty}}{4}.
\end{align*}
From the Pinsker's inequality, we can conclude the proof as
\begin{equation*}
\|\pi(\cdot|\bm{s})-\pi'(\cdot|\bm{s})\|_{1}\leq\frac{\|r(\bm{s},\cdot)-r'(\bm{s},\cdot)\|_{\infty}}{\sqrt{2}\kappa}.
\end{equation*}
which concludes the proof.
\end{proof}

\section{Proof of Lemma~\ref{lemma2}}
\label{Appendix:lemma2}
\begin{proposition}
\label{bound}
The value functions sequence $V_{1}(\bm{s}),\ldots,V_{T}(\bm{s})$ generated by the Equ\eqref{TDupdaterule} satisfies
\begin{equation*}
\|\vL_{\hat{r}_{t}}^{\pi_{t}^{*}}(\cdot)-V_{t}(\cdot)\|_{\infty}
\leq
CC_{v}(t+1)^{C_{v}-1},
\end{equation*}
where $C=6\tau(2-C_{v}+\frac{1}{C_{v}}+\frac{1-C_{v}}{1+C_{v}})$, and $\pi_{t}^{*}=\argmax_{\pi\in\Pi}\rho_{\hat{r}_{t}}(\pi)$.
\end{proposition}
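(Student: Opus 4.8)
The plan is to prove the bound by deriving a one-step recursion for the error $e_t := \|\vL_{\hat r_t}^{\pi_t^*} - V_t\|_\infty$ and then solving that recursion by induction against the ansatz $e_t \le C C_v (t+1)^{C_v-1}$. Writing $W_t := \vL_{\hat r_t}^{\pi_t^*}$ for the (moving) target and substituting the update rule~\eqref{TDupdaterule} with $\gamma_t = 1/t$, I would first split the error as
\[
e_t \le (1-\gamma_t)\,e_{t-1} + (1-\gamma_t)\|W_{t-1}-W_t\|_\infty + \gamma_t\|\vL_{r_t}^{\pi_t}-W_t\|_\infty,
\]
which isolates a contraction part, a target-drift part, and an innovation part.

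For the innovation term I would route through $\vL_{r_t}^{\pi_t^*}$. The policy gap is controlled by the fixed-point characterization $\pi_t^*=\Gamma(\hat r_t,W_t)$ together with Assumptions~\ref{AssumptionPolicy} and~\ref{AssumptionPolicyr}: $\|\pi_t-\pi_t^*\|_1 \le \xi\|\hat r_{t-1}-\hat r_t\|_\infty + \xi\|V_{t-1}-W_t\|_\infty \le \xi e_{t-1} + O(\xi/t)$, using $\|\hat r_{t-1}-\hat r_t\|_\infty\le 1/t$ (running average of rewards in $[0,1]$) and $\|W_{t-1}-W_t\|_\infty=O(1/t)$. The Lipschitz constant $C_\pi$ then gives $\|\vL_{r_t}^{\pi_t}-\vL_{r_t}^{\pi_t^*}\|_\infty \le C_\pi\|\pi_t-\pi_t^*\|_1 \le C_v e_{t-1} + O(C_v/t)$, while the residual $\|\vL_{r_t}^{\pi_t^*}-\vL_{\hat r_t}^{\pi_t^*}\|_\infty$ and the target drift $\|W_{t-1}-W_t\|_\infty$ are bounded through the mixing Assumption~\ref{mix}, which supplies both a uniform value bound of order $\tau$ and a Lipschitz-in-reward constant of order $\tau$; together with a self-consistent $O(1/t)$ bound on $\|\pi_{t-1}^*-\pi_t^*\|_1$ (from solving $\delta_t \le \xi/t + \xi C_\pi\delta_t + O(\tau/t)$ using $C_v<1$) this yields forcing terms of order $\tau/t$. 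Assembling everything, the recursion takes the form $e_t \le \bigl(1-\frac{1-C_v}{t}\bigr)e_{t-1} + \frac{b_t}{t}$ with $b_t$ of order $\tau$.

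The final step is to solve this recursion. Since the homogeneous factor satisfies $\prod_{j}\bigl(1-\frac{1-C_v}{j}\bigr)\sim t^{-(1-C_v)}$, matching the target exponent $C_v-1$, I would verify the inductive step $\bigl(1-\frac{1-C_v}{t}\bigr)C C_v\, t^{C_v-1} + \frac{b_t}{t} \le C C_v (t+1)^{C_v-1}$ by expanding $(1+1/t)^{C_v-1}$ and choosing $C$ large enough to dominate the forcing. The explicit value $C = 6\tau\bigl(2-C_v+\frac{1}{C_v}+\frac{1-C_v}{1+C_v}\bigr)$ is precisely what this balancing forces: the $\frac{1}{C_v}$ and $\frac{1-C_v}{1+C_v}$ terms come from the cumulative effect of an $O(1/t)$ forcing against the $t^{C_v-1}$ decay, and the factor $\tau$ from the mixing-based value and reward bounds. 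The base case at $t=1$ follows directly from the uniform order-$\tau$ bound on the value functions.

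I expect the main obstacle to be the drift of the moving target. Because the rewards are adversarial, the running average $\hat r_t$—and hence $\pi_t^*$ and $W_t$—can change by order $1/t$ at every step, which is exactly the scale that the contraction $\frac{1-C_v}{t}$ must absorb. Closing the induction therefore depends on a tight, simultaneous control of (i) the reward increment $\|\hat r_{t-1}-\hat r_t\|_\infty\le 1/t$, (ii) the self-referential optimal-policy drift $\|\pi_{t-1}^*-\pi_t^*\|_1=O(1/t)$, and (iii) the Lipschitz-in-reward constant from mixing, so that the effective forcing remains compatible with the decay rate $(t+1)^{C_v-1}$; the delicacy is that the forcing $\sim 1/t$ sits right at the boundary (a near-resonance with the homogeneous decay $t^{-(1-C_v)}$), so any looseness in these drift estimates would degrade the bound and must be tracked carefully.
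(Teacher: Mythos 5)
Your decomposition places the innovation term in the wrong spot, and the recursion it produces is too weak to give the claimed rate. You bound the step-$t$ increment by $\gamma_{t}\|\vL_{r_{t}}^{\pi_{t}}-\vL_{\hat{r}_{t}}^{\pi_{t}^{*}}\|_{\infty}$ plus a target-drift term; since $r_{t}$ and $\hat{r}_{t}$ can differ by $\Theta(1)$ under adversarial rewards, the residual $\|\vL_{r_{t}}^{\pi_{t}^{*}}-\vL_{\hat{r}_{t}}^{\pi_{t}^{*}}\|_{\infty}$ is only $O(\tau)$, so your forcing is genuinely $b_{t}/t$ with $b_{t}=\Theta(\tau)$, as you state. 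But the recursion $e_{t}\leq(1-\frac{1-C_{v}}{t})e_{t-1}+\frac{b}{t}$ does not yield $e_{t}=O(t^{C_{v}-1})$: unrolling gives $e_{t}\lesssim\frac{b}{(t+1)^{1-C_{v}}}\sum_{k\leq t}k^{-C_{v}}=O\bigl(\frac{b}{1-C_{v}}\bigr)$, a constant. Equivalently, your inductive step fails for \emph{every} choice of $C$: the available margin $CC_{v}\bigl[(t+1)^{C_{v}-1}-(1-\frac{1-C_{v}}{t})t^{C_{v}-1}\bigr]$ is $O(t^{C_{v}-3})$, which is asymptotically smaller than $b_{t}/t$ whenever $C_{v}<2$. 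This is not a near-resonance that careful constant-tracking can close; the forcing must be $O(1/t^{2})$, a full order smaller than what your split produces.

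The paper obtains exactly that extra factor of $1/t$ through a cancellation your decomposition destroys. By linearity of the value function in the reward, $\vL_{\hat{r}_{t+1}}^{\pi}=\frac{t}{t+1}\vL_{\hat{r}_{t}}^{\pi}+\frac{1}{t+1}\vL_{r_{t+1}}^{\pi}$, and combining this with the two optimality inequalities $\vL_{\hat{r}_{t}}^{\pi_{t}^{*}}\geq\vL_{\hat{r}_{t}}^{\pi_{t+1}^{*}}$ and $\vL_{\hat{r}_{t+1}}^{\pi_{t+1}^{*}}\geq\vL_{\hat{r}_{t+1}}^{\pi_{t}^{*}}$ sandwiches the moving target as
\begin{equation*}
\Bigl(1-\tfrac{1}{t+1}\Bigr)\vL_{\hat{r}_{t}}^{\pi_{t}^{*}}+\tfrac{1}{t+1}\vL_{r_{t+1}}^{\pi_{t}^{*}}
\;\leq\;
\vL_{\hat{r}_{t+1}}^{\pi_{t+1}^{*}}
\;\leq\;
\Bigl(1-\tfrac{1}{t+1}\Bigr)\vL_{\hat{r}_{t}}^{\pi_{t}^{*}}+\tfrac{1}{t+1}\vL_{r_{t+1}}^{\pi_{t+1}^{*}},
\end{equation*}
i.e., the target obeys the same convex-combination update as $V_{t+1}$ in \eqref{TDupdaterule}. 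Subtracting the two updates, the innovation becomes $\frac{1}{t+1}\Delta_{t+1}$, where $\Delta_{t+1}$ compares the value functions of $\pi_{t}^{*}$ (or $\pi_{t+1}^{*}$) and $\pi_{t+1}$ under the \emph{same} reward $r_{t+1}$; Proposition~\ref{contraction} together with Assumptions~\ref{AssumptionPolicy} and~\ref{AssumptionPolicyr} then gives $\Delta_{t+1}\leq C_{v}e_{t}+O(\tau C_{v}/t)$, so the non-homogeneous part of the recursion is $O(\tau C_{v}/(t+1)^{2})$ and the relevant sum $\sum_{k}k^{-1-C_{v}}$ converges --- this convergent sum, not a balancing of an $O(1/t)$ forcing, is where the $\frac{1}{C_{v}}$ and $\frac{1-C_{v}}{1+C_{v}}$ terms in $C$ actually originate. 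Without this sandwich step your argument cannot recover the proposition.
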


By Proposition 3 in \citet{YaoECML14} and Proposition 4.1 in \citet{Yu09}, we obtain the following result
\begin{align*}
\sum_{t=1}^{T}\left(\rho_{r_{t}}(\pi^{*})-\rho_{r_t}(\pi_{t})\right)
\leq
\sum_{t=1}^{T}\left(\rho_{r_{t}}(\pi_{t}^{*})-\rho_{r_{t}}(\pi_{t})\right)
\leq
\sum_{t=1}^{T}\frac{2-e^{-1/\tau}}{1-e^{-1/\tau}}\|\pi^{*}_{t}-\pi_{t}\|_{1}
\end{align*}
The result in Proposition~\ref{bound} leads the following inequalities
\begin{align*}
&\sum_{t=1}^{T}\left(\rho_{r_{t}}(\pi^{*})-\rho_{r_t}(\pi_{t})\right)\\
&\leq
\sum_{t=1}^{T}\frac{2-e^{-1/\tau}}{1-e^{-1/\tau}}(\|\pi_{t-1}^{*}-\pi_{t}\|_{1}+\|\pi_{t}^{*}-\pi_{t-1}^{*}\|_{1})\\
&\leq
\sum_{t=1}^{T}\frac{2-e^{-1/\tau}}{1-e^{-1/\tau}}\xi(\|\vL_{\hat{r}_{t-1}}^{\pi_{t-1}^{*}}-V_{t-1}\|_{\infty}+\frac{4\tau+2}{t})\\
&\leq
\frac{2-e^{-1/\tau}}{1-e^{-1/\tau}}\left(\frac{C\xi}{C_{v}}T^{C_{v}}+6\tau \xi\ln{T}+6\tau \xi\right).
\end{align*}

\section{Proof of Lemma~\ref{lemma3}}
\label{Appendix:lemma3}
The proof is following the same line as previous works\citep*{NIPS05,MOOR09,YaoECML14}, we rewrite the proof with our notations. By the definition of the expected average reward function, we have
\begin{align*}
&\sum_{t=1}^{T}\rho_{r_{t}}(\pi_{t})-\bE_{\pi_{t}}\left[\sum_{t-1}^{T}r_{t}(\bm{s}_{t},\bm{a}_{t})\right]\\
&=
\sum_{t=1}^{T}\sum_{\bm{s}\in S}\sum_{\bm{a}\in A}\left(d_{\pi_{t}}(\bm{s})\pi_{t}(\bm{a}|\bm{s})-d_{\aL,t}(\bm{s})\pi_{t}(\bm{a}|\bm{s})\right)r_{t}(\bm{s},\bm{a})\\
&\leq
\sum_{t=1}^{T}\sum_{\bm{s}\in S}|d_{\pi_{t}}(\bm{s})-d_{\aL,t}(\bm{s})|,
\end{align*}
where $d_{\aL,t}(\bm{s})$ is the state distribution at time step $t$ by following the policy sequence $\pi_{1},\ldots,\pi_{t}$ generated by the OMDP-PI algorithm. The last line can be obtain by $r_{t}(\bm{s},\bm{a})\in[0,1],\forall t=1,\ldots,T$.

For all $k=2,\ldots,t$, we have following results
\begin{align*}
&\|d_{\aL,k}-d_{\pi_{t}}\|_{1}\\
&=\|d_{\aL,k-1}P^{\pi_{k}}-d_{\pi_{t-1}}P^{\pi_{t}}\|_{1}\\
&\leq
\|d_{\aL,k-1}P^{\pi_{k}}-d_{\aL,k-1}P^{\pi_{t}}\|_{1}+\|d_{\aL,k-1}P^{\pi_{t}}-d_{\pi_{t-1}}P^{\pi_{t}}\|_{1}\\
&\leq
(\ln{(t-1)}-\ln{(k-1)})+
e^{-1/\tau}\|d_{\aL,k-1}-d_{\pi_{t-1}}\|_{1},
\end{align*}
Recurring the above result, we have
\begin{align*}
\|d_{\aL,t}-d_{\pi_{t}}\|_{1}
\leq&
\sum_{k=2}^{t}
(\ln{(t-1)}-\ln{(k-1)})e^{-(t-k)/\tau}
+e^{-t/\tau}\|d_{1}-d_{\pi_{t}}\|_{1}\\
\leq&
\left(1+\tau\right)\left(\frac{\tau^{2}}{t-1}+\tau e^{-(t-\tau-2)/\tau}\right)+2e^{-t/\tau},
\end{align*}
where the last inequality follows by
\begin{align*}
&\sum_{k=2}^{t}(\ln{(t-1)}-\ln{(k-1)})e^{-(t-k)/\tau}\\
&=
\int_{2}^{t}(\ln{(t-1)}-\ln{(k-1)})e^{-(t-k)/\tau}\mathrm{d}k
+\ln{(t-1)}e^{-(t-2)/\tau}\\
&=
\tau\int_{2}^{t}(\ln{(t-1)}-\ln{(k-1)})\frac{\mathrm{d}e^{-(t-k)/\tau}}{\mathrm{d}k}\mathrm{d}k+\ln{(t-1)}e^{-(t-2)/\tau}\\
&\leq
\tau\int_{2}^{t}\frac{e^{-(t-k)/\tau}}{k-1}\mathrm{d}k
=
\tau^{2}\int_{2}^{t}\frac{1}{k-1}\frac{\mathrm{d}e^{-(t-k)/\tau}}{\mathrm{d}k}\mathrm{d}k\\
&\leq
\frac{\tau^{2}}{t-1}+\tau^{2}\int_{2}^{t}\frac{e^{-(t-k)/\tau}}{(k-1)^{2}}\mathrm{d}k\\
&=
\frac{\tau^{2}}{t-1}+\tau^{2}\int_{\tau+2}^{t}\frac{e^{-(t-k)/\tau}}{(k-1)^{2}}\mathrm{d}k+\int_{2}^{\tau+{2}}\frac{e^{-(t-k)/\tau}}{(k-1)^{2}}\mathrm{d}k\\
&\leq
\frac{\tau^{2}}{t-1}+\frac{\tau^{2}}{\tau+1}\int_{2}^{t}\frac{e^{-(t-k)/\tau}}{k-1}\mathrm{d}k
+\int_{2}^{\tau+{2}}\frac{e^{-(t-k)/\tau}}{(k-1)^{2}}\mathrm{d}k.
\end{align*}
Hence, we have
\begin{equation*}
\int_{2}^{t}\frac{e^{-(t-k)/\tau}}{k-1}\mathrm{d}k\leq\left(1+\frac{1}{\tau}\right)\left(\frac{\tau^{2}}{t-1}+\tau e^{-(t-\tau-2)/\tau}\right).
\end{equation*}
The claimed result in Lemma~\ref{lemma3} can be obtained as
\begin{align*}
\sum_{t=1}^{T}\|d_{\aL,t}-d_{\pi_{t}}\|_{1}
\leq 2\tau^{3}\ln{T}+2\tau^{3}+2\tau^{3}e^{(\tau+2)}+2\tau.
\end{align*}

\section{Proof of Proposition~\ref{bound}}
\label{Appendix:bound}
\begin{proposition}
\label{contraction}
For arbitrary reward function $r(\bm{s},\bm{a})$, the corresponding value functions induced by two arbitrary policy $\pi_{1}$ and $\pi_{2}$ satisfy
\begin{equation*}
\|\vL_{r}^{\pi_{1}}-\vL_{r}^{\pi_{2}}\|_{\infty}\leq C_{\pi}\|\pi_{1}-\pi_{2}\|_{1}.
\end{equation*}
where $C_{\pi}$ is a positive constant. 
\end{proposition}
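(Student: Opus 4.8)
The plan is to exploit the closed-form (fundamental-matrix) representation of the differential value function that the paper already derives. Writing $g_\pi := R(\pi) - \bm{e}_{|S|}\rho_r(\pi)$ for the centred one-step reward vector and $M_\pi := \bm{I}_{|S|} - P^\pi + \bm{e}_{|S|}d_\pi^\top$, the normalised Bellman solution is $\vL_r^\pi = M_\pi^{-1}g_\pi$. The difference of two such value functions then splits, by the standard resolvent identity, as
\begin{equation*}
\vL_r^{\pi_1} - \vL_r^{\pi_2} = M_{\pi_1}^{-1}(g_{\pi_1}-g_{\pi_2}) + M_{\pi_1}^{-1}(M_{\pi_2}-M_{\pi_1})M_{\pi_2}^{-1}g_{\pi_2}.
\end{equation*}
So it suffices to bound, in the appropriate operator norms, the ingredients $g_{\pi_1}-g_{\pi_2}$, $M_{\pi_2}-M_{\pi_1}$, and the inverses $M_{\pi_i}^{-1}$, each in terms of $\|\pi_1-\pi_2\|_1 := \max_{\bm s}\|\pi_1(\cdot|\bm s)-\pi_2(\cdot|\bm s)\|_1$. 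Note that $R(\pi)$ and $P^\pi$ are \emph{directly} $1$-Lipschitz in $\pi$ for this norm, since $r\in[0,1]$ and each row of $P^\pi$ is the $\pi(\cdot|\bm s)$-average of the fixed kernels $p(\cdot|\bm s,\bm a)$; the quantities needing real work are the stationary distribution $d_\pi$ (which enters both $g_\pi$ through $\rho_r$ and $M_\pi$) and the inverse itself.

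First I would establish Lipschitz continuity of the stationary distribution. Subtracting the fixed-point relations $d_{\pi_1}=d_{\pi_1}P^{\pi_1}$ and $d_{\pi_2}=d_{\pi_2}P^{\pi_2}$ and inserting $\pm\, d_{\pi_1}P^{\pi_2}$ gives $d_{\pi_1}-d_{\pi_2} = d_{\pi_1}(P^{\pi_1}-P^{\pi_2}) + (d_{\pi_1}-d_{\pi_2})P^{\pi_2}$; applying the mixing Assumption~\ref{mix} to the second summand yields
\begin{equation*}
\|d_{\pi_1}-d_{\pi_2}\|_1 \leq \frac{1}{1-e^{-1/\tau}}\,\|d_{\pi_1}(P^{\pi_1}-P^{\pi_2})\|_1 \leq \frac{1}{1-e^{-1/\tau}}\,\|\pi_1-\pi_2\|_1.
\end{equation*}
From this and $\|R(\pi)\|_\infty\le 1$ the average reward obeys $|\rho_r(\pi_1)-\rho_r(\pi_2)| \le \|\pi_1-\pi_2\|_1 + \|d_{\pi_1}-d_{\pi_2}\|_1$ after adding and subtracting $d_{\pi_1}^\top R(\pi_2)$, and therefore both $\|g_{\pi_1}-g_{\pi_2}\|_\infty$ and $\|M_{\pi_2}-M_{\pi_1}\|_{\infty\to\infty}$ are bounded by a $\tau$-dependent constant times $\|\pi_1-\pi_2\|_1$.

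Next I would control the inverses uniformly over $\Pi$ through the deviation-matrix series $M_\pi^{-1} = \bm{e}_{|S|}d_\pi^\top + \sum_{i=0}^\infty\big((P^\pi)^i-\bm{e}_{|S|}d_\pi^\top\big)$. Iterating Assumption~\ref{mix} (using $d_\pi = d_\pi P^\pi$) shows that every row of $(P^\pi)^i-\bm{e}_{|S|}d_\pi^\top$ has $L_1$-norm at most $2e^{-i/\tau}$, so the series converges geometrically and $\|M_\pi^{-1}\|_{\infty\to\infty}\le 1 + \tfrac{2}{1-e^{-1/\tau}}$ uniformly in $\pi$; the same estimate gives $\|\vL_r^{\pi_2}\|_\infty = \|M_{\pi_2}^{-1}g_{\pi_2}\|_\infty\le \tfrac{2}{1-e^{-1/\tau}}$, since $g_{\pi_2}$ is $d_{\pi_2}$-mean-zero. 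Substituting these four estimates into the resolvent identity and taking the $\infty$-norm collects everything into a single constant $C_\pi$ depending only on $\tau$, which is the claim.

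The crux—and the step I expect to be most delicate—is the passage from the mixing hypothesis, stated as a contraction on the \emph{left} (on state distributions, i.e.\ row vectors), to the bounds I need on the \emph{right} action of $M_\pi^{-1}$ on value functions (column vectors). Because $P^\pi$ is merely non-expansive, not contractive, in $\|\cdot\|_\infty$, a naive term-by-term treatment of $\vL_r^\pi = \sum_i (P^\pi)^i g_\pi$ fails: the constant (stationary) mode does not decay, and rearranging the telescoped difference $\sum_i[(P^{\pi_1})^i-(P^{\pi_2})^i]g_{\pi_2}$ produces an only conditionally convergent double series. The correction term $\bm{e}_{|S|}d_\pi^\top$ inside $M_\pi$ is precisely what annihilates that constant mode, and working throughout with $M_\pi^{-1}$ rather than the raw Neumann series is what keeps every estimate absolutely convergent. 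I would therefore isolate the row-wise $2e^{-i/\tau}$ decay of $(P^\pi)^i-\bm{e}_{|S|}d_\pi^\top$ as the key lemma and prove it first, before assembling the perturbation bound above.
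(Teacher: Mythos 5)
Your proof is correct, but it takes a genuinely different route from the paper's. The paper decomposes $\vL_{r}^{\pi_{1}}-\vL_{r}^{\pi_{2}}$ through the Bellman operator $\oT^{\pi}$, rewrites the policy-dependence of the right-hand side in terms of $Q$-function differences $Q_{r}^{\pi_{1}}(\bm{s},\pi_{1})-Q_{r}^{\pi_{1}}(\bm{s},\\pi_{2})$ (thereby invoking a performance-difference-style identity that sidesteps any explicit Lipschitz bound on $d_{\pi}$), arrives at the linear system $(\bm{I}_{|S|}-P^{\pi_{2}}+\bm{e}_{|S|}d_{\pi_{2}}^{\top})(\vL_{r}^{\pi_{1}}-\vL_{r}^{\pi_{2}})=(Q_{r}^{\pi_{1},\pi_{1}}-Q_{r}^{\pi_{1},\pi_{2}})-(P^{\pi_{1}}_{sa}-P^{\pi_{2}}_{sa})Q_{r}^{\pi_{1}}$, and then outsources the crucial uniform bound on the inverse to Proposition 12 of Ma et al.\ (2014), so that $C_{\pi}$ is left as $\max_{\pi\in\Pi}\frac{2-2e^{-1/\tau}}{1-e^{-1/\tau}}\|(\bm{I}_{|S|}-P^{\pi}+\bm{e}_{|S|}d_{\pi}^{\top})^{-1}Q_{r}^{\pi}\|_{\infty}$, finite but not evaluated. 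You instead perturb the closed form $\vL_{r}^{\pi}=M_{\pi}^{-1}g_{\pi}$ directly via the resolvent identity and prove the uniform bound $\|M_{\pi}^{-1}\|_{\infty\to\infty}\leq 1+2/(1-e^{-1/\tau})$ from Assumption~\ref{mix} through the deviation-matrix series; the price is that you must separately establish Lipschitz continuity of $d_{\pi}$ (which you do correctly via the fixed-point perturbation), and the payoff is a self-contained argument with an explicit $C_{\pi}$ depending only on $\tau$. Your identification of the real subtlety --- that $P^{\pi}$ is merely non-expansive in $\|\cdot\|_{\infty}$ and that the rank-one correction $\bm{e}_{|S|}d_{\pi}^{\top}$ is what kills the non-decaying constant mode and makes every series absolutely convergent --- is exactly the point the paper buries inside the external citation, so your version is arguably the more transparent of the two.
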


Let us define an auxiliary sequence of functions $\vL_{\hat{r}_{t}}^{\pi^{*}_{t}}(\bm{s}),t=1,\ldots,T$ which is defined as
\begin{equation*}
\vL_{\hat{r}_{t}}^{\pi^{*}_{t}}(\bm{s})=
\bE_{\pi^{*}_{t}}\left[\sum_{i=1}^{\infty}(\hat{r}_{t}(\bm{s},\bm{a})-\rho_{\hat{r}_{t}}(\pi^{*}_{t}))\right].
\end{equation*}
In above definition, $\pi^{*}_{t}$ is the optimal policy which satisfies
\begin{equation*}
\pi^{*}_{t}=\argmax_{\pi\in\Pi}\rho_{\hat{r}_{t}}(\pi),
\end{equation*}
and for all $\bm{s}\in S$, there is
\begin{equation*}
\vL_{\hat{r}_{t}}^{\pi_{t}^{*}}(\bm{s})\geq \vL_{\hat{r}_{t}}^{\pi}(\bm{s}),\forall \pi\in\Pi.
\end{equation*}
It is simple to verify that the value function is linear with respect to the reward function, i.e., $\vL_{\hat{r}_t}^{\pi}(\bm{s})=\frac{1}{t}\sum_{k=1}^{t}\vL_{r_{t}}^{\pi}(\bm{s})$. 
Hence, we can rewrite the sequence as
\begin{align*}
\vL_{\hat{r}_{t+1}}^{\pi_{t+1}^{*}}(\bm{s})=&
\vL_{\hat{r}_{t}}^{\pi_{t}^{*}}(\bm{s})
+\frac{1}{t+1}\left(\sum_{k=1}^{t+1}\vL_{r_{k}}^{\pi_{t+1}^{*}}(\bm{s})-\frac{t+1}{t}\sum_{k=1}^{t}\vL_{r_{k}}^{\pi_{t}^{*}}(\bm{s})\right)\\
=&
\vL_{\hat{r}_{t}}^{\pi_{t}^{*}}(\bm{s})
+\frac{1}{t+1}\vL_{r_{t+1}}^{\pi_{t+1}^{*}}(\bm{s})
-\frac{1}{t(t+1)}\sum_{k=1}^{t}\vL_{r_{k}}^{\pi_{t}^{*}}(\bm{s})
+\frac{1}{t+1}\left(\sum_{k=1}^{t}\vL_{r_{k}}^{\pi_{t+1}^{*}}(\bm{s})-\sum_{k=1}^{t}\vL_{r_{k}}^{\pi_{t}^{*}}(\bm{s})\right)\\
=&
(1-\frac{1}{t+1})\vL_{\hat{r}_{t}}^{\pi_{t}^{*}}(\bm{s})+\frac{1}{t+1}
\vL_{r_{t+1}}^{\pi_{t+1}^{*}}(\bm{s})
+\frac{t}{t+1}\left(\vL_{\hat{r}_{t}}^{\pi_{t+1}^{*}}(\bm{s})-
\vL_{\hat{r}_{t}}^{\pi_{t}^{*}}(\bm{s})\right)\\
\leq&
(1-\frac{1}{t+1})\vL_{\hat{r}_{t}}^{\pi_{t}^{*}}(\bm{s})+\frac{1}{t+1}
\vL_{r_{t+1}}^{\pi_{t+1}^{*}}(\bm{s}),
\end{align*}
where the last inequality can be obtained by the fact that $\pi^{*}_{t}$ is the optimal policy satisfies
\begin{equation*}
\vL_{\hat{r}_{t}}^{\pi_{t}^{*}}(\bm{s})\geq \vL_{\hat{r}_{t}}^{\pi_{t+1}^{*}}(\bm{s}),\forall \bm{s}\in S.
\end{equation*}
On the other hand, we can derive the lower bound as
\begin{align*}
\vL_{\hat{r}_{t+1}}^{\pi_{t+1}^{*}}(\bm{s})=&
\vL_{\hat{r}_{t}}^{\pi_{t}^{*}}(\bm{s})
+\frac{1}{t+1}\left(\sum_{k=1}^{t+1}\vL_{r_{k}}^{\pi_{t+1}^{*}}(\bm{s})-\frac{t+1}{t}\sum_{k=1}^{t}\vL_{r_{k}}^{\pi_{t}^{*}}(\bm{s})\right)\\
=&
\vL_{\hat{r}_{t}}^{\pi_{t}^{*}}(\bm{s})+
\frac{1}{t+1}\left(\sum_{k=1}^{t+1}\vL_{r_{k}}^{\pi_{t+1}^{*}}(\bm{s})-\frac{t+1}{t}\sum_{k=1}^{t+1}\vL_{r_{k}}^{\pi_{t}^{*}}(\bm{s})\right)+\frac{1}{t}\vL_{r_{t+1}}^{\pi_{t}^{*}}(\bm{s})\\
=&
\vL_{\hat{r}_{t}}^{\pi_{t}^{*}}(\bm{s})+
\frac{1}{t+1}\left(\sum_{k=1}^{t+1}\vL_{r_{k}}^{\pi_{t+1}^{*}}(\bm{s})-\sum_{k=1}^{t+1}\vL_{r_{k}}^{\pi_{t}^{*}}(\bm{s})\right)
-\frac{1}{t(t+1)}\sum_{k=1}^{t+1}\vL_{r_{k}}^{\pi_{t}^{*}}(\bm{s})+\frac{1}{t}\vL_{r_{t+1}}^{\pi_{t}^{*}}(\bm{s})\\
=&
\vL_{\hat{r}_{t}}^{\pi_{t}^{*}}(\bm{s})+
(\vL_{\hat{r}_{t+1}}^{\pi_{t+1}^{*}}(\bm{s})-\vL_{\hat{r}_{t+1}}^{\pi_{t}^{*}}(\bm{s}))
-\frac{1}{t+1}\vL_{\hat{r_{t}}}^{\pi_{t}^{*}}(\bm{s})
+\frac{1}{t+1}\vL_{r_{t+1}}^{\pi_{t}^{*}}(\bm{s})\\
\geq&
(1-\frac{1}{t+1})\vL_{\hat{r}_{t}}^{\pi_{t}^{*}}(\bm{s})
+\frac{1}{t+1}\vL_{r_{t+1}}^{\pi_{t}^{*}}(\bm{s}),
\end{align*}
where the last inequality comes from the fact $\pi_{t+1}^{*}$ is the optimal policy satisfies
\begin{equation*}
\vL_{\hat{r}_{t+1}}^{\pi_{t+1}^{*}}(\bm{s})\geq\vL_{\hat{r}_{t+1}}^{\pi_{t}^{*}}(\bm{s}),\forall \bm{s}\in S.
\end{equation*}
Then, we can obtain the following result
\begin{equation}
\label{rec}
|\vL_{\hat{r}_{t+1}}^{\pi_{t+1}^{*}}(\bm{s})-V_{t+1}(\bm{s})|
\leq
(1-\frac{1}{t+1})|\vL_{\hat{r}_{t}}^{\pi_{t}^{*}}(\bm{s})-V_{t}(\bm{s})|
+
\frac{1}{t+1}
\Delta_{t+1}.
\end{equation}
In above inequality, $\Delta_{t+1}=\max\{|\vL_{r_{t+1}}^{\pi^{*}_{t+1}}(\bm{s})-V_{r_{t+1}}^{\pi_{t+1}}(\bm{s})|,|\vL_{r_{t+1}}^{\pi^{*}_{t}}(\bm{s})-V_{r_{t+1}}^{\pi_{t+1}}(\bm{s})|\}$, which satisfies 
\begin{align*}
\Delta_{t+1}
\leq& C_{\pi}\max\{\|\pi_{t+1}^{*}-\pi_{t+1}\|_{1},\|\pi_{t}^{*}-\pi_{t+1}\|_{1}\}\\
\leq& C_{\pi}(\|\pi_{t}^{*}-\pi_{t+1}\|_{1}+\|\pi_{t}^{*}-\pi_{t+1}^{*}\|_{1})\\
\leq& C_{v}\|\vL_{\hat{r}_{t}}^{\pi_{t}^{*}}-V_{t}\|_{\infty}+\frac{(4\tau+2) C_{v}}{t+1}.
\end{align*}
The first term of the last inequality can be obtain by setting $C_{v}=\xi C_{\pi}$. The second part follows by the upper bound and the lower bound of $\vL_{
\hat{r}_{t}}^{\pi_{t+1}^{*}}$.
Next we show the bound of $\|\vL_{\hat{r}_{t}}^{\pi_{t}^{*}}-V_{t}\|_{\infty}$ by recurring Equ.\eqref{rec}
\begin{equation*}
\|\vL_{\hat{r}_{t}}^{\pi_{t}^{*}}-V_{t}\|_{\infty}
\leq
\frac{(4\tau+2) C_{v}}{t^{2}}
+\sum_{k=1}^{t-1}\frac{(4\tau+2) C_{v}}{k^{2}}\prod_{m=k+1}^{t}\left(1-\frac{1-C_{v}}{m}\right).
\end{equation*}
Let us take the logarithm of $\prod_{m=k+1}^{t}\left(1-\frac{1-C_{v}}{m}\right)$, there is
\begin{align*}
&\ln{\prod_{m=k+1}^{t}\left(1-\frac{1-C_{v}}{m}\right)}\\
&=
\sum_{m=k+1}^{t}\left(\ln{(m-1+C_v)}-\ln{m}\right)\\
&\leq
\sum_{m=k+1}^{t}\frac{-1+C_{v}}{m}\\
&\leq
-(1-C_{v})\int_{k+1}^{t+1}\frac{1}{m}\mathrm{d}m
=-(1-C_v)\ln{\frac{t+1}{k+1}},
\end{align*}
where the first inequality holds since the logarithm function is concave. Thus we derive the bound as
\begin{align*}
\|\vL_{\hat{r}_t}^{\pi_{t}^{*}}-V_{t}\|_{\infty}
\leq&
(4\tau+2) C_{v}\sum_{k=1}^{t}\frac{1}{k^{2}}\frac{(k+1)^{1-C_v}}{(t+1)^{1-C_v}}\\
\leq&
\frac{(4\tau+2) C_{v}}{(t+1)^{1-C_v}}\sum_{k=1}^{t}\frac{1}{k^{2}}\left(
k^{1-C_v}+(1-C_v)k^{-C_{v}}\right)\\
\leq&
\frac{(4\tau+2) C_v}{(t+1)^{1-C_v}}\left[2-C_v+\int_{1}^{t}k^{-C_v-1}\mathrm{d}k+(1-C_v)\int_{1}^{t}k^{-C_v-2}\mathrm{d}k\right]\\
=&
\frac{(4\tau+2) C_v}{(t+1)^{1-C_v}}\left[2-C_v+\frac{1}{C_{v}}-\frac{t^{-C_v}}{C_v}+\frac{1-C_v}{1+C_v}-\frac{1-C_v}{1+C_v}t^{-C_v-1}\right]\\
\leq&
CC_v(t+1)^{C_v-1},
\end{align*}
where $C=6\tau (2-C_v+\frac{1}{C_v}+\frac{1-C_v}{1+C_v})$. In above results, the second inequality follows by Taylor's theorem.

\section{Proof of Proposition~\ref{contraction}}
\label{Appendix:contraction}
Let us define the operator $\oT^{\pi}\vL_{r}^{\pi}(\bm{s})=\bE_{\pi}\left[r(\bm{s},\bm{a})-\rho_{r}(\pi)+\sum_{\bm{s}'\in S}p(\bm{s}'|\bm{s},\bm{a})\vL_{r}^{\pi}(\bm{s}')\right]$. we can obtain
\begin{align*}
&\vL_{r}^{\pi_{1}}(\bm{s})-\vL_{r}^{\pi_{2}}(\bm{s})\\
&=\oT^{\pi_{1}}\vL_{r}^{\pi_{1}}(\bm{s})-\oT^{\pi_{2}}\vL_{r}^{\pi_{2}}(\bm{s})\\
&=\left(\oT^{\pi_{1}}\vL_{r}^{\pi_{1}}(\bm{s})-\oT^{\pi_{2}}\vL_{r}^{\pi_{1}}(\bm{s})\right)+\left(\oT^{\pi_{2}}\vL_{r}^{\pi_{1}}(\bm{s})-\oT^{\pi_{2}}\vL_{r}^{\pi_{2}}(\bm{s})\right).
\end{align*}
By the definition of the operator, we rewrite the first term as
\begin{align*}
&\oT^{\pi_{1}}\vL_{r}^{\pi_{1}}(\bm{s})-\oT^{\pi_{2}}\vL_{r}^{\pi_{1}}(\bm{s})\\
&=\bE_{\pi_{1}}\left[r(\bm{s},\bm{a})-\rho_{r}(\pi_{1})+\sum_{\bm{s}'\in S}p(\bm{s}'|\bm{s},\bm{a})\vL_{r}^{\pi_{1}}(\bm{s}')\right]\\
&~~~-\bE_{\pi_{2}}\left[r(\bm{s},\bm{a})-\rho_{r}(\pi_{2})+\sum_{\bm{s}'\in S}p(\bm{s}'|\bm{s},\bm{a})\vL_{r}^{\pi_{1}}(\bm{s}')\right]\\
&=\left(\bE_{\pi_{1}}\left[Q_{r}^{\pi_{1}}(\bm{s},\bm{a})\right]-\bE_{\pi_{2}}\left[Q_{r}^{\pi_{1}}(\bm{s},\bm{a})\right]\right)+\left(\rho_{r}(\pi_{2})-\rho_{r}(\pi_{1})\right)\\
&=\left(Q_{r}^{\pi_{1}}(\bm{s},\pi_{1})-Q_{r}^{\pi_{1}}(\bm{s},\pi_{2})\right)
+\bE_{\bm{s}\sim d_{\pi_{2}}(\bm{s})}[Q_{r}^{\pi_{1}}(s,\pi_{2})-Q_{r}^{\pi_{1}}(s,\pi_{1})].
\end{align*}
The second term can be expressed as
\begin{align*}
&\oT^{\pi_{2}}\vL_{r}^{\pi_{1}}(\bm{s})-\oT^{\pi_{2}}\vL_{r}^{\pi_{2}}(\bm{s})\\
&\bE_{\pi_{2}}\left[r(\bm{s},\bm{a})-\rho_{r}(\pi_{2})+\sum_{\bm{s}'\in S}p(\bm{s}'|\bm{s},\bm{a})\vL_{r}^{\pi_{1}}(\bm{s}')-r(\bm{s},\bm{a})+\rho_{r}(\pi_{2})-\sum_{\bm{s}'\in S}p(\bm{s}'|\bm{s},\bm{a})\vL_{r}^{\pi_{2}}(\bm{s})\right]\\
&=\bE_{\bm{s}'\sim p^{\pi_{2}}(\bm{s'}|\bm{s})}[\vL_{r}^{\pi_{1}}(\bm{s}')-\vL_{r}^{\pi_{2}}(\bm{s}')].
\end{align*}
By summing up the above results, we obtain
\begin{align*}
&\vL_{r}^{\pi_{1}}(\bm{s})-\vL_{r}^{\pi_{2}}(\bm{s})\\
&=\left(Q_{r}^{\pi_{1}}(\bm{s},\pi_{1})-Q_{r}^{\pi_{1}}(\bm{s},\pi_{2})\right)
+\bE_{\bm{s}\sim d_{\pi_{2}}(\bm{s})}[Q_{r}^{\pi_{1}}(s,\pi_{2})-Q_{r}^{\pi_{1}}(s,\pi_{1})]\\
&~~~+
\bE_{\bm{s}'\sim p^{\pi_{2}}(\bm{s'}|\bm{s})}[\vL_{r}^{\pi_{1}}(\bm{s}')-\vL_{r}^{\pi_{2}}(\bm{s}')].
\end{align*}
In matrix notation, there is
\begin{equation*}
\vL_{r}^{\pi_{1}}-\vL_{r}^{\pi_{2}}=(Q_{r}^{\pi_{1},\pi_{1}}-Q_{r}^{\pi_{1},\pi_{2}})
-\bm{e}_{|S|}d_{\pi_{2}}^{\top}(Q_{r}^{\pi_{1},\pi_{1}}-Q_{r}^{\pi_{1},\pi_{2}})
+P^{\pi_{2}}(\vL_{r}^{\pi_{1}}-\vL_{r}^{\pi_{2}}),
\end{equation*}
where $\vL_{r}^{\pi}$ and $Q_{r}^{\pi,\pi'}$ are the length $|S|$ vectors whose $\bm{s}$th element is $\vL_{r}^{\pi}(\bm{s})$ and $Q_{r}^{\pi}(\bm{s},\pi')$, respectively. $\bm{e}_{|S|}$ denotes the length $|S|$ vector with all elements equal to $1$. $d_{\pi}$ is the $|S|$-dimensional vector whose $\bm{s}$th element is $d_{\pi}(\bm{s})$. $P^{\pi}$ is defined as the transition matrix induced by the policy $\pi$ and the transition $p(\bm{s}'|\bm{s},\bm{a})$. Thus, we obtain
\begin{equation*}
(\bm{I}_{|S|}-P^{\pi_{2}})(\vL_{r}^{\pi_{1}}-\vL_{r}^{\pi_{2}})=(\bm{I}_{|S|}-\bm{e}_{|S|}d_{\pi_{2}}^{\top})(Q_{r}^{\pi_{1},\pi_{1}}-Q_{r}^{\pi_{1},\pi_{2}}).
\end{equation*}
It is known that the Bellman equation with average reward function has no unique solution. However, the unique value function satisfies $d_{\pi}^{\top}\vL_{r}^{\pi}=0$. Hence, we add this condition to the above equation as
\begin{align*}
&(\bm{I}_{|S|}-P^{\pi_{2}})(\vL_{r}^{\pi_{1}}-\vL_{r}^{\pi_{2}})\\
&=(\bm{I}_{|S|}-\bm{e}_{|S|}d_{\pi_{2}}^{\top})(Q_{r}^{\pi_{1},\pi_{1}}-Q_{r}^{\pi_{1},\pi_{2}})-\bm{e}_{|S|}d_{\pi_{1}}^{\top}\vL_{r}^{\pi_{1}}+\bm{e}_{|S|}d_{\pi_{2}}^{\top}\vL_{r}^{\pi_{2}}\\
&=(\bm{I}_{|S|}-\bm{e}_{|S|}d_{\pi_{2}}^{\top})(Q_{r}^{\pi_{1},\pi_{1}}-Q_{r}^{\pi_{1},\pi_{2}})-\bm{e}_{|S|}d_{\pi_{2}}^{\top}(\vL_{r}^{\pi_{1}}-\vL_{r}^{\pi_{2}})
-(\bm{e}_{|S|}d_{\pi_{1}}^{\top}-\bm{e}_{|S|}d_{\pi_{2}}^{\top})\vL_{r}^{\pi_{1}}
\end{align*}
Then, by rearranging the above result:
\begin{equation*}
(\bm{I}_{|S|}-P^{\pi_{2}}+\bm{e}_{|S|}d_{\pi_{2}}^{\top})(\vL_{r}^{\pi_{1}}-\vL_{r}^{\pi_{2}})=
(Q_{r}^{\pi_{1},\pi_{1}}-Q_{r}^{\pi_{1},\pi_{2}})-(P^{\pi_{1}}_{sa}-P^{\pi_{2}}_{sa})Q_{r}^{\pi_{1}},
\end{equation*}
where $P^{\pi}_{sa}$ is the $|S|\times |A|$ matrix whose $(\bm{s},\bm{a})$th element is $d^{\pi}(\bm{s})\pi(\bm{a}|\bm{s})$. Using Proposition 12 in \citet{YaoECML14}, we obtain
\begin{align*}
\|\vL_{r}^{\pi_{1}}-\vL_{r}^{\pi_{2}}\|_{\infty}\leq
\frac{2-2e^{-1/\tau}}{1-e^{-1/\tau}}\|(\bm{I}_{|S|}-P^{\pi_{2}}+\bm{e}_{|S|}d_{\pi_{2}}^{\top})^{-1}Q_{r}^{\pi_{1}}\|_{\infty}\|\pi_{1}-\pi_{2}\|_{1},
\end{align*}
which concludes the proof by setting $\max_{\pi\in\Pi}\frac{2-2e^{-1/\tau}}{1-e^{-1/\tau}}\|(\bm{I}_{|S|}-P^{\pi}+\bm{e}_{|S|}d_{\pi}^{\top})^{-1}Q_{r}^{\pi}\|_{\infty}\leq C_{\pi}$.

\end{document}